\newcommand{\one}{\mathbf{1}}
\newcommand{\bx}{{\bf{x}}}
\newcommand{\by}{{\bf{y}}}
\newcommand{\bv}{{\bf{v}}}
\newcommand{\bd}{{\bf{d}}}
\newcommand{\bw}{{\bf{w}}}
\newcommand{\ie}{\textit{i.e.}~}
\newcommand{\eg}{\textit{e.g.}~}
\DeclareMathOperator*{\argmin}{arg\,min}
\DeclareMathOperator*{\diag}{diag}
\DeclareMathOperator*{\vect}{vec}
\DeclareMathOperator*{\Tr}{Tr}
\DeclareMathOperator{\rank}{rank}
\newcommand{\cf}{cf}
\newcommand{\fun}{g}
\newcommand{\fundev}{g'}
\newcommand{\reals}{\mathbb{R}}
\newcommand{\FIX}[1]{{\color{black}#1}}
\newtheorem{lemma}{Lemma}
\newtheorem{prop}{Proposition}
\newtheorem{Corollary}{Corollary}
\newcolumntype{C}{>{\centering\arraybackslash}p{4.8em}}
\begin{document}

%%%%%%%%% TITLE
%\title{SVDeep: End-to-End Learning of Networks with SVD Layers}
%\title{CNNs of Singular Value}
%\title{Structured Layers in Deep Networks}
%\title{Matrix Backpropagation for Deep Networks with Structured Layers}
\title{Training Deep Networks with Structured Layers by \\Matrix Backpropagation\thanks{\small This is an extended version of the ICCV 2015 article \cite{ionescu15iccv}}}
%\title{Structured Layers in Deep Networks by Matrix Backpropagation by Adjoint Variations}
%\title{End-to-End Learning of Hierarchical Architectures with Structured Layers}

\author[2,3]{Catalin Ionescu\thanks{\tt\small catalin.ionescu@ins.uni-bonn.de}}
\author[3]{Orestis Vantzos\thanks{\tt\small orestis.vantzos@ins.uni-bonn.de}}
\author[1,2]{Cristian Sminchisescu\thanks{\tt\small cristian.sminchisescu@math.lth.se}}
\affil[1]{Department of Mathematics, Faculty of Engineering, Lund University}
\affil[2]{Institute of Mathematics of the Romanian Academy}
\affil[3]{Institute for Numerical Simulation, University of Bonn}

% For a paper whose authors are all at the same institution,
% omit the following lines up until the closing ``}''.
% Additional authors and addresses can be added with ``\and'',
% just like the second author.
% To save space, use either the email address or home page, not both

\maketitle
%\thispagestyle{empty}

%%%%%%%%% ABSTRACT
\begin{abstract}
Deep neural network architectures have recently produced excellent results in a variety of areas in artificial intelligence and visual recognition, well surpassing traditional shallow architectures trained using hand-designed features. The power of deep networks stems \emph{both} from their ability to perform local computations followed by pointwise non-linearities over increasingly larger receptive fields, \emph{and} from the simplicity and scalability of the gradient-descent training procedure based on backpropagation. An open problem is the inclusion of layers that perform global, structured matrix computations like segmentation (e.g.~normalized cuts) or higher-order pooling (e.g.~log-tangent space metrics defined over the manifold of symmetric positive definite matrices) while preserving the validity and efficiency of an end-to-end deep training framework. In this paper we propose a sound mathematical apparatus to formally integrate global structured computation into deep computation architectures. At the heart of our methodology is the development of the theory and practice of backpropagation that generalizes to the calculus of \FIX{adjoint} matrix variations. The proposed \emph{matrix backpropagation} methodology applies broadly to a variety of problems in machine learning or computational perception. Here we illustrate it by performing visual segmentation experiments using the BSDS and MSCOCO benchmarks, where we show that deep networks relying on second-order pooling and normalized cuts layers, trained end-to-end using matrix backpropagation, outperform counterparts that do not take advantage of such global layers.
\end{abstract}

%%%%%%%%% BODY TEXT
\section{Introduction}

Recently, the end-to-end learning of deep architectures using stochastic gradient descent, based on very large datasets, has produced impressive results in realistic settings, for a variety of computer vision and machine learning domains\cite{krizhevsky2012imagenet,Simonyan14c,carreira2014distributed,schmidhuber14}. There is now a renewed enthusiasm of creating integrated, automatic models that can handle the diverse tasks associated with an able perceiving system.
%worthy of the name. 

One of the most widely used architecture is the convolutional network (ConvNet) \cite{lecun1998gradient,krizhevsky2012imagenet}, a deep processing model based on the composition of convolution and pooling with pointwise nonlinearities for efficient classification and learning. While ConvNets are sufficiently expressive for classification tasks, a comprehensive, deep architecture, that uniformly covers the types of structured non-linearities required for other calculations has not yet been established. 
%for other visual tasks that rely on other types of non-linearities.
In turn, matrix factorization plays a central role in classical (shallow) algorithms for many different computer vision  and machine learning problems, such as image segmentation \cite{shi00ncuts}, feature extraction, \FIX{descriptor design} \cite{harris1988combined,carreira2012semantic}, structure from motion \cite{tomasi1992shape}, camera calibration \cite{hartley2003multiple}, and dimensionality reduction \cite{jolliffe2002principal, Belkin2003}, among others. Singular value decomposition (SVD) in particular, is extremely popular because of its ability to efficiently produce global solutions to various problems.
%(e.g.~least squares problems)
%The fact that so many problems benefit from matrix factorization \FIX{within different models} is also a reflection of the fragmentation of the field: most problems are studied independently and have their own specialized \FIX{models and} algorithms. 

In this paper we propose to enrich the dictionary of \FIX{deep networks} with layer generalizations \FIX{and fundamental} matrix function computational blocks that have proved successful and flexible over years in vision and learning models with global constraints. We consider layers which are explicitly \emph{structure-aware} in the sense that they \FIX{preserve global invariants} of the underlying problem. %We will make them more precise in sec. \ref{sec:sl}.
Our paper makes two main mathematical contributions. The first shows how to operate with structured layers when learning a deep network. For this purpose we outline a \FIX{\emph{matrix generalization of backpropagation} that  offers a rigorous, formal treatment of global properties.} Our second contribution is to further derive and instantiate the methodology to learn convolutional networks for two different and very successful types of structured layers: 1) \emph{second-order pooling} \cite{carreira2012semantic} and 2) \emph{normalized cuts} \cite{shi00ncuts}. \FIX{An illustration of the resulting deep architecture for O$_2$P is given in fig. \ref{fig:my_label}.} In challenging datasets like BSDS and MSCOCO, we experimentally demonstrate the feasibility and added value of these two types of networks over counterparts that are not using global computational layers. \begin{figure*}[]
\centering
\includegraphics[width=\textwidth]{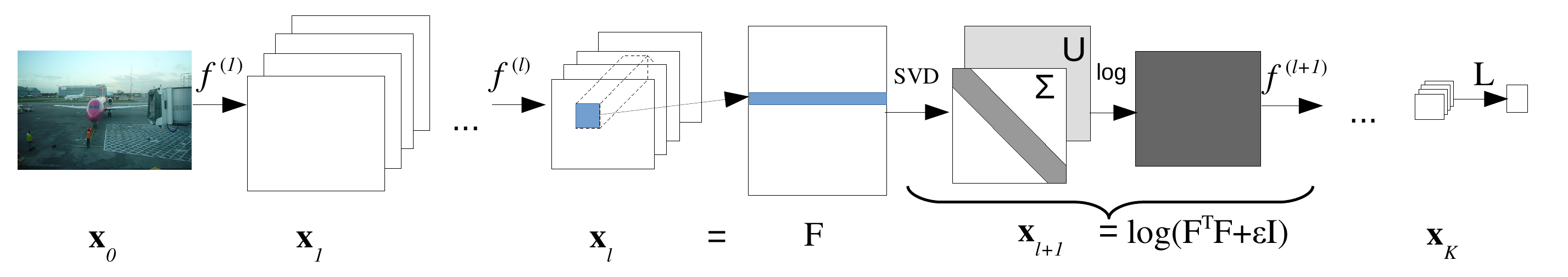}
\caption{Overview of the DeepO$_2$P recognition architecture made possible by our methodology. The levels $1\ldots l$ represent standard convolutional layers. Layer $l+1$ is the global matrix logarithm layer presented in the paper. This is followed by fully connected layers and a logistic loss. The methodology presented in the paper enables analytic computation over both local and global layers, in a system that remains trainable end-to-end, for all its local and global parameters, using matrix variation generalizations entitled \emph{matrix backpropagation}.}
\label{fig:my_label}
\end{figure*}

%\noindent{\bf Related Work. } 
\section{Related Work}

Our work relates to both the extensive literature in the area of (deep) neural networks (see \cite{schmidhuber14} for a review) and with (shallow) architectures that have been proven popular and successful in machine learning and computer vision\cite{shi00ncuts,bach06,sminchisescu_ijcv04,eriksson2011normalized,carreira2012semantic}. While deep neural networks models have focused, traditionally, on generality and scalability, the shallow computer vision and machine learning architectures have often been designed with global computation and structure modeling in mind. Our objective in this work is to provide first steps and one possible approach towards formally marrying these two lines of work.

Neural networks in their modern realization can be traced back at least to \cite{mcculloch1943logical}. The Perceptron \cite{frank1957perceptron} was the first two layer network, although limited in expressiveness. The derivation of backpropagation\cite{werbos74} and its further advances more than a decade later \cite{rumelhart86,lecun1988theoretical}, allowed the development and the integration of new layers and the exploration of complex, more expressive architectures. This process lead to a successes in practical applications, \eg for digit recognition \cite{lecun1998gradient}. More recently, the availability of hardware, the large scale datasets \cite{krizhevsky2012imagenet}, and the development of complex enough architectures, lead to models that currently outperform all existing representations for challenging, general recognition problems. This recommends neural networks as one of the forefront methodologies for building representations for prediction problems in computer vision and beyond\cite{hinton12deep}. \cite{Simonyan14c} then showed that even more complex, deeper models can obtain even better results. This lead computer vision researchers to focus on transferring this success to the detection and semantic segmentation problems, fields where handcrafted  features\cite{dalal2005histograms,lazebnik2006beyond}, statistically inspired\cite{perronnin10,gosselin2014revisiting,carreira2012semantic} and deformable part models\cite{felzenszwalb2010object} were dominant at the time. R-CNN \cite{girshick2014rich} uses standard networks (e.g.~AlexNet \cite{krizhevsky2012imagenet} or VGG-16 \cite{Simonyan14c}) to classify object proposals for detection. SDS \cite{hariharan2014simultaneous} uses two input streams, one the original image and the second the image with the background of the region masked each with AlexNet architectures to take advantage of the shape information provided by the mask. He \emph{et al.} \cite{he2014spatial,dai2014convolutional} propose a global spatial pyramid pooling layer before the fully connected layers, which perform simple max-pooling over pyramid-structured cells of the image. \cite{ciresan2012multi} uses committees to improve robustness and pushed performance close to, or beyond, human performance on tasks like traffic sign recognition and house number identification. In our first application we illustrate a deep architecture with a new log-covariance pooling layer that proved dominant for free-form region description \cite{carreira2012semantic}, on top of manually designed local features such as SIFT. The methodology we propose makes it possible to deal with the difficulties of learning the underlying features even in the presence such a complex intermediate representation. This part is also related to kernel learning approaches over the manifold of positive-definite matrices  \cite{jayasumana2013kernel}. However, we introduce different mathematical techniques related to matrix backpropagation, which has the advantages of scalability and fitting together perfectly with existing deep network layers.
%Starting from the early days of computer vision \emph{grouping (segmentation)} and \emph{semantic segmentation} have received significant attention\cite{ullman2000high}. Grouping focuses on identifying structures or pixels that belong together \eg because they come from the same data object or visual background. 
%From a desired ground-truth perspective, it is thus a relatively ambiguous task, as different people might segment the same image differently, due to different emphasis on level of detail and feature significance, and indeed they do so \cite{MartinFTM01}. 
%It is also often more amenable to structure discovery approaches like clustering analysis. 
%Semantic segmentation aims to further label each pixel with the class of the object to which it belongs. 
%It is clearer as a task and could be easier to solve, by using machine learning methods, if classes are locally unambiguous (\eg different textures or stuff). and most often treated with more classical machine learning approaches like classification. 
%In both grouping and semantic segmentation the outputs are often structured -- if a pixel belongs to an object then neighboring ones most likely belong to it as well, and this structure should be exploited because locally image patches may be very ambiguous. 

Among the first methods integrating structured models with CNNs is the work of \cite{bottou1997global} who showed that HMMs can be integrated into deep networks and showed results for speech and text analysis problems. \cite{peng2009conditional} more recently demonstrated that using CRFs and deep networks can be trained end-to-end, showing strong results on digit recognition and protein secondary structure prediction. Cast as a conditional random field (CRF) semantic segmentation has almost immediately taken advantage of the deep network revolution by providing useful smoothing on top of high-performing CNN pixel classifier predictions. \cite{long2014fully} showed that the fully connected components, usually discarded by previous methods, can also be made convolutional, \ie the original resolution lost during pooling operations can be recovered by means a trained deconvolution layer. \cite{chen2014semantic} obtained state-of-the-art semantic segmentation results using an architecture similar to \cite{long2014fully} but enforcing structure using globally connected CRFs\cite{krahenbuhl2013parameter} where only the unary potentials are learnt. Simultaneous work by \cite{Schwing15fully} and \cite{ZhengJRVSDHT_iccv15} show that, since CRF mean field based approximate updates are differentiable, a fixed number of inference steps can be unrolled, the loss can be applied to them and then the gradients can be backpropagated back first through the inference to the convolutional layers of the potentials. In \cite{chen14learning} a more efficient learning method is obtained by blending inference and training in order to obtain a procedure that updates parameters as inference progresses. Unlike previous methods \cite{Lin15Efficient} learns CNN based pairwise potentials, separate from the CNN of the unary potential. Learning the model requires piece-wise training and minimizes an upper-bound on the CRF energy that decouples the potentials.  
%to obtain state-of-the-art results on VOC12 semantic segmentation challenge. 

Our matrix backpropagation methodology generally applies to models that can be expressed as composed structured non-linear matrix functions. As such, it can be applied to these deep models with a CRF top structure as well where \eg belief propagation in models with Gaussian potentials can be expressed as a solution to a linear system\cite{tappen07learning}. While CRF-based methods designed on top of deep nets traditionally focus on iterative inference and learning where in order to construct the derivatives of the final layer, one must combine the derivatives of each inference iterations, our methodology can be expressed in terms of invariants on the converged solution of linear systems -- therefore it does not require iterative derivative calculations during inference.
%but often, due to computational concerns, in practice, a combination of parametrization and approximation techniques, renders the inference so simple that existing backpropagation methodology can be readily applied. 

Our second model used to illustrate the matrix backpropagation methodology, normalized cuts, has received less attention from the deep network community as evidenced by the fact that leading methods are still handcrafted. Spectral formulations like \emph{normalized cuts}(NCuts) \cite{shi00ncuts} have obtained state-of-the-art results when used with strong pixel-level classifiers on top of hand-designed features\cite{amfm_pami2011}. A different approach is taken in \cite{cour2005spectral} who show that MRF inference can be relaxed to a spectral problem. Turaga \emph{et al} \cite{briggman2009maximin} were the first to demonstrate the learning of an image segmentation model end-to-end using CNN features, while optimizing a standard segmentation criterion. Learning and inference of NCuts was placed on firmer ground by Bach and Jordan \cite{bach06} who introduced a (shallow) learning formulation which we build upon in this work with several important differences. First, it uses matrix derivatives, but makes appeal directly to the eigen-decompostion to derive them instead of projectors as we do. This allows them to truncate the spectrum and to consider only the eigenspace corresponding to the largest eigenvalues at the cost of (potentially) making the criterion non-differentiable. We instead consider the entire eigenspace and rely on projectors (thus on the eigen-decomposition only indirectly) and aim to learn the dimensionality in the process. More importantly however, instead of learning parameters on top of fixed features as in \cite{bach06}, we directly learn the affinity matrix by adapting the underlying feature representation, modeled as a deep network. The resulting method, combining strong pixel-level classifiers and a global (spectral) representation, can more naturally adapt pixel-level or semi-local predictions for object detection and semantic segmentation, as these operations require not only structured, global computations, but also, for consistency, propagation of the information in the image. Careful application of our methodology keeps the entire architecture trainable end-to-end. From another direction, in an effort to generalize convolutions to general non-Euclidean and non-equally spaced grids the work of \cite{bruna13} realizes the necessity of spectral layers for learning the graph structure but since the computational issues brought on in the process are not the main focus, they do not handle them directly. In \cite{henaff15} such aspects are partially addressed but the authors focus on learning parameters applied to the eigenvalues instead of learning the eigenvectors and eigenvalues as we do. In this context our focus is on the underlying theory of backpropagation when handling structured objects like matrices, allowing one to derive those and many other similar, but also more complex derivatives. 

Symbolic matrix partial derivatives, one of the basis of our work, were first systematically studied in the seminal paper \cite{dwyer1948symbolic}\footnote{With few additions from disparate sources, these are the matrix derivatives results catalogued in the collection of matrix identities called ``The Matrix Cookbook''\cite{matrixcookbook}.}, although not for complex non-linear layers like SVD or eigen-decomposition. Since then it has received interest mainly in the context of studying estimators in statistics and econometrics \cite{magnus1999matrix}. Recently, the field of \emph{automatic differentiation} has also shown interest in this theory when considering matrix functions\cite{giles2008collected}. This very powerful machinery has however appeared only scarcely in computer vision and machine learning. Some instances, although not treating the general case, and focusing on the subset of the elements (variables) of interest, appeared in the context of camera calibration\cite{papadopoulo2000estimating}, for learning parameters in a normalized cuts model\cite{bach06}, learning the parameters of Gaussian CRFs for denoising \cite{tappen07learning} and learning deep canonical correlation models \cite{andrew2013deep}. The recent surge of interest in deep networks has exposed limitations of current compositional (layered) architectures based on local operations, which in turn pushes the research in the direction of structured models requiring matrix based representations. Recently \cite{lin15bilinearcnn} multiplied the outputs of two networks as matrices, in order to obtain improved fine-grained recognition models, although the matrix derivatives in those case are straightforward. To our knowledge, we are the first to bring this methodology, in its full generality, to the fore in the context of composed global non-linear matrix functions and deep networks, and to show promising results for two different computer vision and machine learning models.

Our methodological contributions are as follows: {\it (a)} the formulation of matrix back-propagation as a generalized chain rule mechanism for computing derivatives of composed matrix functions with respect to matrix inputs (rather than scalar inputs, as in standard back-propagation), by relying on the theory of adjoint matrix variations; {\it (b)} the introduction of spectral and non-linear (global structured) layers like SVD and eigen-decomposition which allow the calculation of derivatives with respect to all the quantities of interest, in particular all the singular values and singular vectors or eigen-values and eigen-vectors, {\it (c)} the formulation of non-linear matrix function layers that take SVD or eigen-decompositions as inputs, with instantiations for second-order pooling models, {\it (d)} recipes for computing derivatives of matrix projector operations, instantiated for normalized-cuts models. {\it (e)} The novel methodology {\it (a)-(d)} applies broadly and is illustrated for end-to-end visual learning in deep networks with very competitive results.

\paragraph{Paper organization:} In the next section \S\ref{sec:deep_processing_networks} we briefly present the main elements of the current deep network models. In \S\ref{sec:matrix_backpropagation} we outline the challenges and a computational recipe to handle matrix layers. \S\ref{sec:spectral_non-linear_layers} presents a ``matrix function'' layer using either SVD or an EIG decomposition and instantiated these to build deep second-order pooling models. In \S\ref{sec:scl}, we introduce an in-depth treatment to learn deep normalized cuts models. The experiments are presented in \S\ref{sec:exps}.

\section{Deep Processing Networks}\label{sec:deep_processing_networks}

Let $\mathcal{D}=\{(\bd^{(i)},\by^{(i)})\}_{i=1\ldots N}$ be a set of data points (e.g.~images) and their corresponding desired targets (e.g.~class labels) drawn from a distribution $p(\bd,\by)$. Let $L:\reals^{d}\rightarrow\reals$ be a \emph{loss function} \ie~a penalty of mismatch between the \emph{model prediction function} $f:\reals^D\rightarrow\reals^d$ with parameters $W$ for the input $\bd$ \ie~$f(\bd^{(i)},W)$ and the desired output $\by^{(i)}$. The foundation of many learning approaches, including the ones considered here, is the principle of \emph{empirical risk minimization}, which states that under mild conditions, due to concentration of measure, the \emph{empirical risk} $\hat{R}(W)=\frac{1}{N}\sum_{i=1}^N L(f(\bd^{(i)},W),\by^{(i)})$ converges to the true risk $R(W)=\int L(f(\bd,W),\by)p(\bd,\by)$. This implies that it suffices to minimize the empirical risk to learn a function that will do well in general \ie
\begin{equation}
\argmin_W \frac{1}{N}\sum_{i=1}^N L(f(\bd^{(i)},W),\by^{(i)}) \label{eqn:learning}
\end{equation}
If $L$ and $f$ are both continuous (though not necessarily with continuous derivatives) one can use (sub-)gradient descent on \eqref{eqn:learning} for learning the parameters. This supports a general and effective framework for learning provided that a (sub-) gradient exists.

Deep networks, as a model, consider a class of functions $f$, which can be written as a series of successive function compositions $f = f^{(K)}\circ f^{(K-1)}\circ\ldots\circ f^{(1)}$ with parameter \FIX{tuple} $W=(\bw_K,\bw_{K-1},\ldots,\bw_1)$, where $f^{(l)}$ are called layers, $\bw_l$ are the parameters of layer $l$ and $K$ is the number of layers. \FIX{Denote by $L^{(l)}=L\circ f^{(K)} \circ\ldots\circ f^{(l)}$ the \emph{loss as a function of the layer $\mathbf{x}_{l-1}$}.} This \FIX{notation is convenient} because it 
%allows for 
%easier development and experimentation through 
conceptually separates the network architecture from the layer design. 

Since the computation of the gradient is the only requirement for learning,  \FIX{an important step is the effective use} of the principle of \emph{backpropagation} (backprop). Backprop, as described in the literature, is an algorithm to efficiently compute the gradient of the loss with respect to the parameters, in the case of layers where the outputs can be expressed as vectors of scalars, which in the most general form, can individually be expressed as non-linear functions of the input. The algorithm recursively computes gradients with respect to both the \FIX{inputs to the layers and their parameters (fig. \ref{fig:notation})} by making use of the \emph{chain rule}. For a data tuple $(\bd,\by)$ and a layer $l$ this is computing 
\FIX{\begin{equation}\dfrac{\partial L^{(l)}(\bx_{l-1},\by)}{\partial \bw_{l}}=\dfrac{\partial L^{(l+1)}(\bx_{l},\by)}{\partial \bx_{l}}\dfrac{\partial f^{(l)}(\bx_{l-1})}{\partial \bw_{l}}\label{eqn:param_deriv_new}\end{equation}
\begin{equation}\dfrac{\partial L^{(l)}(\bx_{l-1},\by)}{\partial \bx_{l-1}}=\dfrac{\partial L^{(l+1)}(\bx_{l},\by)}{\partial \bx_{l}}\dfrac{\partial f^{(l)}(\bx_{l-1})}{\partial \bx_{l-1}}\label{eqn:data_deriv_new}\end{equation} }
%\[\dfrac{\partial L(f^{(l+1)}(\bx_{l}),\by)}{\partial \bw_{l-1}}=\dfrac{\partial L(f^{(l+1)}(\bx_{l}),\by)}{\partial \bx_{l}}\dfrac{\partial f^{(l)}(\bx_{l-1})}{\partial \bw_{l-1}}\label{eqn:param_deriv}\]
%\[\dfrac{\partial L(f^{(l+1)}(\bx_{l}),\by)}{\partial \bx_{l-1}}=\dfrac{\partial L(f^{(l+1)}(\bx_{l}),\by)}{\partial \bx_{l}}\dfrac{\partial f^{(l)}(\bx_{l-1})}{\partial \bx_{l-1}}\label{eqn:data_deriv}\] 
where \FIX{$\bx_l=f^{(l)}(\bx_{l-1})$} and $\bx_0=\bd$ (data). The first expression is the gradient we seek (required for updating $\bw_{l}$) whereas the second one is necessary for calculating the gradients in the layers below and \FIX{updating} their parameters.

\section{Structured Layers}\label{sec:sl}
The existing literature concentrates on layers of the form $f^{(l)}=(f^{(l)}_{1}(\bx_{l-1}),\ldots, f^{(l)}_{d_{l+1}}(\bx_{l-1}))$, where $f^{(l)}_{j}:\reals^{d_l}\rightarrow\reals$, thus $f^{(l)}:\reals^{d_l}\rightarrow\reals^{d_{l+1}}$. This simplifies \FIX{processing} significantly because in order to compute \FIX{$\dfrac{\partial L^{(l)}(\bx_{l-1},\by)}{\partial \bx_{l-1}}$} there is a well defined notion of partial derivative with respect to the layer $\dfrac{\partial f^{(l)}(\bx_{l-1})}{\partial \bx_{l-1}}$ as well as a simple expression for the chain rule, as given in \eqref{eqn:param_deriv_new} and \eqref{eqn:data_deriv_new}. However this formulation processes spatial coordinates independently and %\begin{figure}[]
%\centering
%\includegraphics[width=.9\columnwidth]{images/eig_layer.pdf}
%\caption{\FIX{Sketch of an "eigenvalues layer" for a small matrix. As the roots of a high-degree polynomial, the eigenvalues are either difficult (formulas for $n=3,4$ involve nested $n$-th roots) or impossible ($n>4$, Abel theorem) to write as element-wise functions of the matrix entries $a_{ij}$. Treating the layer as a single matrix-vector function, representing an appropriate matrix decomposition algorithm, works around this.}}
%\label{fig:eig_layer}
%\end{figure}
does not immediately generalize to more complex mathematical objects.

Consider a matrix view of the (3-dimensional tensor) layer, $X=\bx_{l-1}$, where $X_{ij}\in\reals$, with $i$ being the spatial coordinate\footnote{For simplicity and without loss of generality we reshape (linearize) the tensor's spatial indices to one dimension with $m_l$ coordinates.} and $j$ the index of the input feature. Then we can define a non-linearity on the entire $X\in \reals^{m_l\times d_l}$, as a matrix, instead of each (group) of spatial coordinate separately. As the matrix derivative with respect to a vector \FIX{(set aside to a matrix)} is no longer well-defined, \FIX{a matrix generalization of backpropation is necessary.} 

For clarity, one has to draw a distinction between the data structures used to represent the layers and the mathematical and computational operations performed. For example a convolutional neural network layer can be viewed, under the current implementations, as a tensor where two dimensions correspond to spatial coordinates and one dimension corresponds to features. However, all mathematical calculations at the level of layers (including forward processing or derivative calculations) are \emph{not expressed on tensors}. Instead these are performed on vectors and their scalar outputs are used to selectively index and fill the tensor data structures. In contrast, a genuine matrix calculus would not just rely on matrices as data structures, but use them as first class objects. This would require a coherent formulation where non-linear structured operations like forward propagation or derivative calculations are directly expressed using matrices. The distinction is not stylistic, as complex matrix operations for \eg SVD or eigen-decomposition and their derivatives simply cannot be implemented as index-filling.

\subsection{Computer Vision Models}
To motivate the use of structured layers we will consider the following two models from computer vision:
\begin{enumerate}
\item \emph{Second-Order Pooling} is one of the competitive hand-designed  feature descriptors for regions \cite{carreira2012semantic} used in the top-performing method of the PASCAL VOC semantic segmentation, comp.~5 track \cite{ics_nips11,li2013composite}. It represents global high-order statistics of local descriptors inside each region by computing a covariance matrix $F^\top F$, where $F\in\reals^{m\times d}$ is the matrix of image features present in the region at the $m$ spatial locations with $d$ feature dimensions, then applying a tangent space mapping \cite{Arsigny07geometric} using the matrix logarithm, which can be computed using SVD. Instead of pooling over hand-designed local descriptors, such as SIFT \cite{lowe1999object}, one could learn a deep feature extractor (\eg ConvNet) end-to-end, with an upper second-order pooling structured layer of the form 
\begin{equation}C=\log(F^\top F+\epsilon I) \label{eqn:deepo2p_svd}\end{equation} 
where $\epsilon I$ is a regularizer preventing $\log$ singularities around 0 when the covariance matrix is not full rank.
\item \emph{Normalized Cuts} is an influential global image segmentation method based on pairwise similarities \cite{shi00ncuts}. It constructs a matrix of local interactions $W=FF^\top$, where $F\in\reals^{m\times d}$ is a similar feature matrix with $m$ spatial locations and $d$ dimensions in the descriptor, then solves a generalized eigenvalue problem to determine a global image partitioning. Instead of manually designed affinities, one could, given a ground truth target segmentation, learn end-to-end the deep features that produce good normalized cuts.
\end{enumerate}

\subsection{Matrix Backpropagation}\label{sec:matrix_backpropagation}
\begin{figure}[]
\centering
\includegraphics[width=.5\columnwidth]{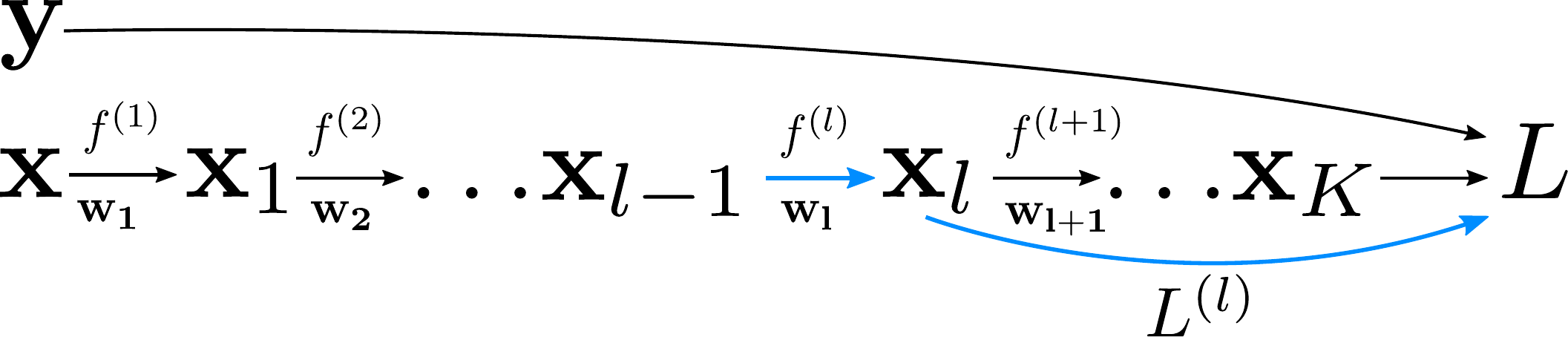}
\caption{\FIX{Deep architecture where data $\bf x$ and targets $\bf y$ are fed to a loss function $L$, via successively composed functions $f^{(l)}$ with parameters $\bw_l$. \emph{Backpropagation} (blue arrows) recursively expresses the partial derivative of the loss $L$ w.r.t. the current layer parameters based on the partial derivatives of the next layer, \cf eq. \ref{eqn:param_deriv_new}.}}
\label{fig:notation}
\end{figure}
%We note that even if the partial derivatives of the layer $\dfrac{\partial f^{(l)}(\bx_{l-1})}{\partial \bx_{l-1}}$ does not necessarily exist, since $L$ maps to $\reals$, both $\dfrac{\partial L^{(l+1)}}{\partial \bx_{l}}$ and $\dfrac{\partial L^{(l)}}{\partial \bx_{l-1}}$ are well defined even in the case where $\bx_l$ and $\bx_{l-1}$ are matrices. We call \emph{matrix backpropagation} (MBP) the use of matrix calculus to map between the two. We denote $X=\bx_{l-1}$ and $Y=\bx_{l}$. The processing can be \FIX{performed} in 2 steps\footnote{The additional material contains some of the basic identities and material necessary for each of these steps.}:
We call \emph{matrix backpropagation} (MBP) the use of matrix calculus\cite{dwyer1948symbolic,magnus1999matrix,giles2008collected} to map between the partial derivatives $\dfrac{\partial L^{(l+1)}}{\partial \bx_{l}}$ and $\dfrac{\partial L^{(l)}}{\partial \bx_{l-1}}$ at two consecutive structured layers. Note that since for all layers $l$ the function $L^{(l)}$ maps to real numbers, by construction, both derivatives are well defined. In this section we simplify notation writing $L=L^{(l+1)}$, $X,Y$ are the matrix versions of $\bx_l$ and $\bx_{l-1}$ respectively, $f=f^{(l)}$ thus $f(X) = Y$.

The basis for the derivation is best understood starting from the Taylor expansion of the matrix functions\cite{magnus1999matrix} at the two layers
\begin{align}
 L\circ f(X+dX) -L\circ f(X)&= \frac{\partial L\circ f}{\partial X}:dX + \operatorname{O}(\lVert dX\rVert^2)\label{taylorX}\\
 L(Y+dY) - L(Y) &=  \frac{\partial L}{\partial Y}:dY + \operatorname{O}(\lVert dY\rVert^2)\label{taylorY}
\end{align}
where we introduced the notation $A:B = \Tr(A^\top B)=\vect(A)^\top\vect(B)$ for convenience. Thus $A:B$ is an inner product in the Euclidean $\vect$'d matrix space. 

Our strategy of derivation, outlined below, involves two important concepts. A \emph{variation} corresponds to the forward sensitivity and allows the easy manipulation of the first and higher order terms of a Taylor expansion. \textit{E.g.} for a matrix function $g$ we write $dg = dg(X;dX) = g(X+dX)-g(X)=A(X):dX+O(\Vert dX\Vert^2)$, with $A(X)$ a matrix of the same size as $X$ and depending on $X$ but not on $dX$. The \emph{(partial) derivative} is by definition the linear `coefficient' of a Taylor expansion \ie the coefficient of $dX$ ergo $\frac{\partial g}{\partial X}=A(X)$. The variation and the partial derivative are very different objects: $dg$ is always defined if $g$ is defined, it can take matrix inputs, and can map to a space of matrices. In contrast, the partial derivative also makes sense when $g$ has matrix inputs, but is only defined when $g$ has scalar co-domain (image)\footnote{See \cite{magnus1999matrix} for an in-depth treatment including questions about existence and uniqueness of the partial derivatives of matrix functions.}. The variation is used for the convenience of the derivation and needs not be implemented in practice. What we are ultimately after, for the purpose of matrix backpropagation, is the partial derivative. 

The important element to understand is that when
\begin{equation}
dY = df(X;dX)\label{eqn:differential_eq}
\end{equation}
the expressions \eqref{taylorX} and \eqref{taylorY} should be equal, since they both represent the variation of $L$ for a given perturbation $dX$ of the variable $X$. The first order terms of the Taylor expansions should also match, which gives us the chain rule
\begin{equation}
\frac{\partial L\circ f}{\partial X}:dX=\frac{\partial L}{\partial Y}:dY\label{eqn:chain_rule_eq}
\end{equation}
The aim is to use this identity to express the partial derivative of the left hand side as a function of the partial derivative in the right hand side. The general recipe for our derivation follows two steps\footnote{The appendices contain some of the basic identities used in each of these steps.}:
\begin{enumerate}
\item Derive the functional $\mathcal{L}$ describing the variations of the upper layer variables with respect to the variations of the lower layer variables 
\begin{equation}\label{eq:step1}
dY=\mathcal{L}(dX) \triangleq df(X;dX)
\end{equation}
%In many cases, for simplicity, we will only consider the first order variations since in the second step will disregard the higher order variations anyways thus this simplification, which makes the derivations more readable, remains correct. 
The derivation of the variation involves not only the forward mapping of the layer, $f^{(l)}$, but also the invariants associated to its variables. If $X$ satisfies certain invariants, these need to be preserved to first (leading) order when computing $dX$. Invariants such as diagonality, symmetry, or orthogonality need to be explicitly enforced in our methodology, by means of additional equations (constraints) beyond \eqref{eq:step1}.

\item Given $dY$ produced in {\it step 1} above, we know that \eqref{eqn:chain_rule_eq} holds. Thus we can use the properties of the matrix inner product $A:B = \Tr(A^\top B)$ to obtain the partial derivatives with respect to the lower layer variables. 
Since the ``$:$'' operator is an inner product on the space of matrices, this is equivalent to constructively producing $\mathcal{L}^*$, a non-linear adjoint operator\footnote{For arguments $a, b$, the adjoint operator $\mathcal{L}^*$ of $\mathcal{L}$ is defined as having the property that $ a:\mathcal{L}(b)=\mathcal{L}^*(a):b$.} of $\mathcal{L}$
\begin{multline}
%\hspace{-20pt}\frac{\partial L^{(l+1)}}{\partial Y}:dY = \frac{\partial L^{(l+1)}}{\partial Y}:\mathcal{L}(dX) \triangleq \mathcal{L}^*\left(\frac{\partial L^{(l+1)}}{\partial Y}\right):dX
%\Rightarrow\, \mathcal{L}^*\left(\frac{\partial L^{(l+1)}}{\partial Y}\right) = \frac{\partial L^{(l)}}{\partial X}\text{ by the chain rule}
%\hspace{-20pt}
\frac{\partial L}{\partial Y}:dY = \frac{\partial L}{\partial Y}:\mathcal{L}(dX) \triangleq \mathcal{L}^*\left(\frac{\partial L}{\partial Y}\right):dX
\Rightarrow\, \mathcal{L}^*\left(\frac{\partial L}{\partial Y}\right) = \frac{\partial L\circ f}{\partial X}\hfill\text{ (by the chain rule)}
\end{multline}

This holds for a \emph{general variation}, e.g.~for a non-symmetric $dX$ even if $X$ itself is symmetric. To remain within a subspace like the one of symmetric, diagonal or orthogonal matrices, we consider a projection of $dX$ onto the space of admissible variations and then transfer the projection onto the derivative, to obtain the \emph{projected gradient}. We use this technique repeatedly in our derivations.

\end{enumerate}

\noindent Summarizing, the objective of our calculation is to obtain $\frac{\partial L\circ f}{\partial X}$. Specifically, we will compute $\frac{\partial L}{\partial Y}$ (typically back-propagated from the layer above) and $dY=\mathcal{L}(dX)$, then process the resulting expression using matrix identities, in order to obtain an analytic expression for $\frac{\partial L}{\partial Y}:\mathcal{L}(dX)$. In turn, extracting the inner product terms $\mathcal{L}^*\left(\frac{\partial L}{\partial Y}\right):dX$ from that expression, allows us to compute $\mathcal{L}^*$.

\section{Spectral and Non-Linear Layers}\label{sec:spectral_non-linear_layers}

\FIX{When global matrix operations are used in deep networks, they compound with other processing layers performed along the way. Such steps are architecture specific, although calculations like spectral decomposition are widespread, and central, in many vision and machine learning models. SVD possesses a powerful structure that allows one to express complex transformations like matrix functions and algorithms in a numerically stable form. In the sequel we show how the widely useful \emph{singular value decomposition} (SVD) and the \emph{symmetric eigenvalue problem} (EIG) can be leveraged towards constructing layers that perform global calculations in deep networks.} 
%The first computational blocks we detail are the SVD and EIG layers. 
%We denote $A_{sym}=\frac{1}{2}(A^\top+A)$ and $A_{diag}$ be $A$ with all off-diagonal elements set to 0.

\subsection{Spectral Layers}

The SVD layer receives a matrix $X$ as input and produces a tuple of 3 matrices $U$,$\Sigma$ and $V$. Under the notation above, this means $Y=f(X)=(U,\Sigma,V)$. The matrices satisfy the regular invariants of the SVD decomposition \ie $X=U\Sigma V^\top$, $U^\top U = I$, $V^\top V=I$ and $\Sigma$ is diagonal which will be taken into account in the derivation. The following proposition gives the variations of the outputs \ie $\mathcal{L}(dX)=dY=(dU, d\Sigma, dV)$ and the partial derivative with respect to the layer $\dfrac{\partial L\circ f}{\partial X}$ as a function of the partial derivatives of the outputs $\frac{\partial L}{\partial Y}$, \ie  $\dfrac{\partial L}{\partial U}$,$\dfrac{\partial L}{\partial \Sigma}$ and $\dfrac{\partial L}{\partial V}$. Note that these correspond, respectively, to the first and second step of the methodology presented in \S\ref{sec:matrix_backpropagation}. In the sequel, we denote $A_{sym}=\frac{1}{2}(A^\top+A)$ and $A_{diag}$ be $A$ with all off-diagonal elements set to 0.

\begin{prop}[SVD Variations]\label{prop:svd}Let $X=U\Sigma V^\top$ with $X\in\reals^{m,n}$ and $m\geq n$, such that $U^\top U=I$, $V^\top V=I$ and $\Sigma$ possessing diagonal structure. Then 
\begin{equation}d\Sigma = (U^\top dX V)_{diag}\label{eqn:svd_dS}\end{equation} and 
\begin{equation}dV=2V\left(K^\top\circ(\Sigma^\top U^\top dX V)_{sym}\right) \label{eqn:svd_dV}\end{equation} with
\begin{equation}
 K_{ij} = \begin{cases} \dfrac{1}{\sigma_i^2-\sigma_j^2}, &i\neq j\\
 0, &i=j
 \end{cases}
\end{equation}
Let $\Sigma_n\in\reals^{n\times n}$ be the top $n$ rows of $\Sigma$ and consider the block decomposition $dU=\bigl(dU_1\,|\, dU_2 \bigr)$ with $dU_1\in\reals^{m\times n}$ and $dU_2\in\reals^{m\times m-n}$ and similarly $\dfrac{\partial L}{\partial U} = \left(\left(\dfrac{\partial L}{\partial U}\right)_1\,\middle|\, \left(\dfrac{\partial L}{\partial U}\right)_2\right)$, where $\left(\dfrac{\partial L}{\partial U}\right)_1\in\reals^{m\times n}$ and $\left(\dfrac{\partial L}{\partial U}\right)_2\in\reals^{m\times m-n}$. Then 
\begin{equation}
dU = \bigl(C\Sigma_n^{-1}\,|\, -U_1\Sigma_n^{-1}C^\top U_2 \bigr)
\end{equation}
with
\begin{equation}
  C= dX V - Ud\Sigma - U\Sigma dV^\top V 
\end{equation}
Consequently the partial derivatives are
\begin{equation}
\frac{\partial L\circ f}{\partial X} = DV^\top+U\left(\frac{\partial L}{\partial \Sigma}-U^\top D\right)_{diag}V^\top+2U\Sigma\left(K^\top\circ\left(V^\top\left(\frac{\partial L}{\partial V}-VD^\top U\Sigma\right)\right)\right)_{sym}V^\top\label{eqn:dLdX}
\end{equation}
where 
\begin{equation}
D=\left(\dfrac{\partial L}{\partial U}\right)_1\Sigma_n^{-1}-U_2\left(\dfrac{\partial L}{\partial U}\right)_2^\top U_1\Sigma_n^{-1}
\end{equation}
\end{prop}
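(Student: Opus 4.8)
The plan is to follow the two-step recipe of \S\ref{sec:matrix_backpropagation} verbatim: first express the variation $dY=(dU,d\Sigma,dV)$ as a linear map of $dX$ that respects the SVD invariants, then transpose that map through the inner product $A:B$ to read off $\partial L\circ f/\partial X$ from \eqref{eqn:chain_rule_eq}. For Step 1 I would differentiate the defining identity to get $dX = dU\,\Sigma V^\top + U\,d\Sigma\,V^\top + U\Sigma\,dV^\top$, and differentiate the constraints $U^\top U=I$, $V^\top V=I$ to conclude that $\Omega_U:=U^\top dU$ and $\Omega_V:=V^\top dV$ are antisymmetric (these are the extra invariant-preserving equations required alongside \eqref{eq:step1}). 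Multiplying the first identity on the left by $U^\top$ and on the right by $V$ gives $R:=U^\top dX V = \Omega_U\Sigma + d\Sigma - \Sigma\Omega_V$. Taking the diagonal part annihilates both $\Omega_U\Sigma$ and $\Sigma\Omega_V$ (an antisymmetric matrix multiplied into the rectangular diagonal $\Sigma$ has vanishing diagonal), which yields \eqref{eqn:svd_dS} at once.

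To isolate $dV$ I would eliminate $\Omega_U$ by forming the symmetric combination $\Sigma^\top R + R^\top\Sigma$: the $\Sigma^\top\Omega_U\Sigma$ piece is antisymmetric and cancels, leaving $2\,\Sigma^\top d\Sigma - \Sigma_n^2\Omega_V + \Omega_V\Sigma_n^2$ (here $\Sigma_n^2=\Sigma^\top\Sigma$). Reading this entrywise off the diagonal gives $(\sigma_j^2-\sigma_i^2)(\Omega_V)_{ij} = 2(\Sigma^\top R)_{sym,ij}$, so $\Omega_V = 2\,K^\top\circ(\Sigma^\top R)_{sym}$ once one confirms the sign bookkeeping $1/(\sigma_j^2-\sigma_i^2)=K^\top_{ij}$; since $V$ is square orthogonal, $dV=V\Omega_V$ recovers \eqref{eqn:svd_dV}. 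For $dU$ I would isolate $C := dX V - U\,d\Sigma - U\Sigma\,dV^\top V$ from the relation $dX V = dU\,\Sigma + U\,d\Sigma + U\Sigma\,dV^\top V$, observing that $C = dU\,\Sigma = dU_1\Sigma_n$ because the bottom block of $\Sigma$ vanishes, hence $dU_1=C\Sigma_n^{-1}$. The block $dU_2$ comes from antisymmetry of $\Omega_U$: the off-diagonal block gives $U_1^\top dU_2 = -(U_2^\top dU_1)^\top = -\Sigma_n^{-1}C^\top U_2$, while the in-null-space rotation $U_2^\top dU_2$ is pure gauge (the left null space of $X$ is determined only up to $O(m-n)$) and I set it to zero; reassembling $dU_2 = U_1(U_1^\top dU_2)+U_2(U_2^\top dU_2)$ then gives $-U_1\Sigma_n^{-1}C^\top U_2$.

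For Step 2 I would expand $\frac{\partial L}{\partial Y}:dY = \frac{\partial L}{\partial U}:dU + \frac{\partial L}{\partial\Sigma}:d\Sigma + \frac{\partial L}{\partial V}:dV$ and repeatedly push projections to the other side of the inner product using the elementary adjoint identities $A:U^\top BV = UAV^\top:B$, $A:B_{sym}=A_{sym}:B$, $A:B_{diag}=A_{diag}:B$, and $A:(K^\top\circ B)=(K^\top\circ A):B$. The $\frac{\partial L}{\partial U}$ term collapses to $D:C$ after combining the $U_1$ and $U_2$ contributions (this is precisely where $D$ is born). Since $C$ carries $dX$, $d\Sigma$ and $dV^\top$, it then feeds back into the other two channels: the $dX$ part gives the leading $DV^\top$; the $-D:U\,d\Sigma$ part merges with $\frac{\partial L}{\partial\Sigma}:d\Sigma$ to produce the $\big(\frac{\partial L}{\partial\Sigma}-U^\top D\big)_{diag}$ term; and the $-D:U\Sigma\,dV^\top V$ part merges with $\frac{\partial L}{\partial V}:dV$, effectively replacing $\frac{\partial L}{\partial V}$ by $\frac{\partial L}{\partial V}-VD^\top U\Sigma$ inside the very same $K^\top\circ(\cdot)_{sym}$ machinery already used for $dV$. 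Collecting the three channels back into the form $(\cdot):dX$ gives exactly \eqref{eqn:dLdX}.

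I expect the main obstacle to be the rectangular/full-SVD bookkeeping rather than any single hard estimate: correctly handling the $U_2$ block together with the null-space gauge choice $U_2^\top dU_2=0$, and above all tracking the cross-coupling whereby $C$ routes $\frac{\partial L}{\partial U}$ back into the $\Sigma$- and $V$-channels, so that $D$ appears consistently in all three terms. The symmetrization trick that eliminates $\Omega_U$ when solving for $\Omega_V$ is the one genuinely clever step, and careful attention to the $K$ versus $K^\top$ signs and to transferring the \emph{sym}, \emph{diag}, and Hadamard projections onto the gradient side is where errors are most likely to creep in.
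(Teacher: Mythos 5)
Your proposal is correct and follows essentially the same route as the paper's proof: differentiate $X=U\Sigma V^\top$, use the antisymmetry of $U^\top dU$ and $dV^\top V$ from the orthogonality constraints to extract $d\Sigma$, $dV$ and the block form of $dU$, then transpose everything through the colon product to collect the three channels into $\frac{\partial L\circ f}{\partial X}$. The only differences are cosmetic (your $\Omega_V=V^\top dV$ is the transpose of the paper's $B=dV^\top V$, and your symmetrized combination $\Sigma^\top R+R^\top\Sigma$ is the paper's entrywise elimination written in matrix form), plus your explicit observation that $U_2^\top dU_2$ is only constrained to be antisymmetric and must be fixed to zero as a gauge choice --- a point the paper passes over when it claims the orthogonality condition determines $dU_2$ uniquely.
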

\begin{proof}
Let $X = U \Sigma V^\top$ by way of SVD, with $X\in\mathbb{R}^{m\times n}$ and $m\geq n$, $\Sigma\in \mathbb{R}^{m\times n}$ diagonal and $U\in\mathbb{R}^{m\times m}$, $V\in\mathbb{R}^{n\times n}$ orthogonal. For a given variation $dX$ of $X$, we want to calculate the variations $dU$,$d\Sigma$ and $dV$. The variation $d\Sigma$ is diagonal, like $\Sigma$, whereas $dU$ and $dV$ satisfy (by orthogonality) the constraints $U^\top dU + dU^\top U = 0$ and $V^\top dV + dV^\top V = 0$ respectively. Taking the first variation of the SVD decomposition, we have
\begin{equation}
dX = dU\Sigma V^\top + Ud\Sigma V^\top + U\Sigma dV^\top\label{eqn:dX_SVD}
\end{equation}
then, by using the orthogonality of $U$ and $V$, it follows that
\begin{align*}
 \Rightarrow &U^\top dXV = U^\top dU\Sigma + d\Sigma + \Sigma dV^\top V \Rightarrow \\ 
 \Rightarrow &R = A\Sigma + d\Sigma + \Sigma B
\end{align*}

with $R=U^\top dXV$ and $A=U^\top dU$, $B=dV^\top V$ both antisymmetric. Since $d\Sigma$ is diagonal whereas $A\Sigma$, $\Sigma B$ have both zero diagonal, we conclude that 
\begin{equation}{d\Sigma = (U^\top dX V)_{diag}}\label{svd_dSigma}\end{equation} 

The off-diagonal part then satisfies
\begin{multline}
 A\Sigma + \Sigma B = R-R_{diag} \Rightarrow \Sigma^\top A \Sigma + \Sigma^\top\Sigma B = \Sigma^\top(R-R_{diag}) = \Sigma^\top \bar{R}\\
 \Rightarrow \begin{cases}
  \sigma_i a_{ij} \sigma_j + \sigma_i^2 b_{ij} = \sigma_i {\bar R}_{ij}\\
  -\sigma_j a_{ij} \sigma_i - \sigma_j^2 b_{ij} = \sigma_j {\bar R}_{ji} &\text{($A$,$B$ antisym.)}
 \end{cases}\\ \Rightarrow (\sigma_i^2 - \sigma_j^2)b_{ij} = \sigma_i \bar{R}_{ij} + \bar{R}_{ji}\sigma_j \Rightarrow b_{ij} = \begin{cases} (\sigma_i^2 - \sigma_j^2)^{-1}\left(\sigma_i \bar{R}_{ij} + \bar{R}_{ji}\sigma_j\right)\, , & i\neq j\\
 0\, , & i=j\end{cases}
\end{multline}
where $\sigma_i = \Sigma_{ii}$ and $\bar{R}=R-R_{diag}$. We can write this as $B=K\circ(\Sigma^\top\bar{R} + \bar{R}^\top\Sigma)=K\circ(\Sigma^\top R + R^\top\Sigma)$, where 
\begin{equation}
 K_{ij} = \begin{cases} \dfrac{1}{\sigma_i^2-\sigma_j^2}, &i\neq j\\
 0, &i=j
 \end{cases}\label{svd_K}
\end{equation}
Finally, 
\begin{equation}dV=VB^\top \Rightarrow dV=2V\left(K^\top\circ(\Sigma^\top U^\top dX V)_{sym}\right)\label{svd_dV}\end{equation}
Note that this satisfies the condition $V^\top dV + dV^\top V=0$ by construction, and so preserves the orthogonality of $V$ to leading order.

%Going back to $A=\bar{K}\circ (\bar{R}\Sigma + \Sigma \bar{R}^\top)$, with $\bar{K}$ an $m\times m$ matrix with the top $n\times n$ corner equal to $K$ and 0 otherwise, then
%\begin{equation}
%dU = UA\Rightarrow dU = 2U(\bar{K} \circ (U^\top dX V \Sigma^\top)_{sym})
%\end{equation}

Using the $d\Sigma$ and $dV$ we have obtained, one can obtain $dU$ from the variations of $dX$ in \eqref{eqn:dX_SVD}:
\begin{equation*}
  dX = dU\Sigma V^\top + Ud\Sigma V^\top + U\Sigma dV^\top \Rightarrow dU\Sigma = dX V - Ud\Sigma - U\Sigma dV^\top V =: C
\end{equation*}
This equation admits any solution of the block form $dU=\bigl(dU_1\, dU_2 \bigr)$, where $dU_1:=C\Sigma_n^{-1}\in\reals^{m\times n}$ ($\Sigma_n$ being the top $n$ rows of $\Sigma$) and $dU_2\in\reals^{m\times m-n}$ arbitrary as introduced in the proposition. To determine $dU_2$ uniquely we turn to the orthogonality condition
\begin{equation*}
 dU^\top U + U^\top dU=0 \Rightarrow \begin{pmatrix} dU_1^\top U_1 + U_1^\top dU_1 & dU_1^\top U_2 + U_1^\top dU_2  \\ dU_2^\top U_1 + U_2^\top dU_1 & dU_2^\top U_2 + U_2^\top dU_2  \end{pmatrix} = 0
\end{equation*}
The block $dU_1$ already satisfies the top left equation, so we look at the top right (which is equivalent to bottom left). Noting that $U_1^\top U_1 = I$ by the orthogonality of $U$, we can verify that $dU_2 = -U_1 dU_1^\top U_2$. Since this also satisfies the remaining equation, orthogonality is satisfied. Summarizing
\begin{equation}
 dU = \bigl(C\Sigma_n^{-1}\,|\, -U_1\Sigma_n^{-1}C^\top U_2 \bigr), \quad C= dX V - Ud\Sigma - U\Sigma dV^\top V 
\end{equation}

We proceed further with the second part of the matrix backpropagation to compute $\dfrac{\partial L\circ f}{\partial X}$. Note that the chain rule in this case is $\dfrac{\partial L\circ f}{\partial X}:dX=\dfrac{\partial L}{\partial U}:dU+\dfrac{\partial L}{\partial \Sigma}:d\Sigma+\dfrac{\partial L}{\partial V}:dV$. We can replace $dU$, $dV$ and $d\Sigma$ with their expressions w.r.t.~$dX$ to obtain the partial derivatives. 

%The block form of $dU$ requires us to also introduce the blocks $\dfrac{\partial L}{\partial U} = \left(\left(\dfrac{\partial L}{\partial U}\right)_1\,\middle|\, \left(\dfrac{\partial L}{\partial U}\right)_2\right)$, where $\left(\dfrac{\partial L}{\partial U}\right)_1\in\reals^{m\times n}$ and $\left(\dfrac{\partial L}{\partial U}\right)_2\in\reals^{m\times m-n}$, so we can then apply the identity $\bigl(A_1|A_2\bigr):\bigl(B_1|B_2\bigr)=A_1:B_1 + A_2:B_2$.
Before computing the full derivatives we simplify slightly the expression corresponding to $dU$
\begin{align}
\dfrac{\partial L}{\partial U}:dU&=\left(\dfrac{\partial L}{\partial U}\right)_1:C\Sigma_n^{-1} + \left(\dfrac{\partial L}{\partial U}\right)_2:-U_1\Sigma_n^{-1}C^\top U_2\\
&=\left(\dfrac{\partial L}{\partial U}\right)_1\Sigma_n^{-1}:C - \Sigma^{-1}_n U_1^\top\left(\dfrac{\partial L}{\partial U}\right)_2 U_2^\top:C^\top\\
&=\left(\dfrac{\partial L}{\partial U}\right)_1\Sigma_n^{-1}:C - U_2\left(\dfrac{\partial L}{\partial U}\right)_2^\top U_1\Sigma_n^{-1}:C\\
&=\left(\left(\dfrac{\partial L}{\partial U}\right)_1\Sigma_n^{-1}-U_2\left(\dfrac{\partial L}{\partial U}\right)_2^\top U_1\Sigma_n^{-1}\right):(dX V - Ud\Sigma - U\Sigma dV^\top V )\\
&=D:dX V - D:U d\Sigma - D:U\Sigma dV^\top V\\
&=DV^\top:dX-U^\top D:d\Sigma - \Sigma U^\top DV^\top:dV^\top\\
&=DV^\top:dX-U^\top D:d\Sigma - VD^\top U\Sigma :dV
\end{align}
%where we introduced for convenience $D$ as
%\begin{equation}
%D:=\left(\dfrac{\partial L}{\partial U}\right)_1\Sigma_n^{-1}-U_2\left(\dfrac{\partial L}{\partial U}\right)_2^\top U_1\Sigma_n^{-1}
%\end{equation}

Now we can plug this result in the full derivative
\begin{align*}
 \frac{\partial L}{\partial U}:dU  + \frac{\partial L}{\partial\Sigma}:d \Sigma+ \frac{\partial L}{\partial V}:dV=
                & (DV^\top:dX-U^\top D:d\Sigma - VD^\top U\Sigma:dV) +\frac{\partial L}{\partial \Sigma}:(U^\top dX V)_{diag}+\\
                &+\frac{\partial L}{\partial V}:\left\{2V\left(K^\top\circ(\Sigma^\top U^\top dX V)_{sym}\right)\right\}\\
  				=& DV^\top:dX +\left(\frac{\partial L}{\partial \Sigma}-U^\top D\right):(U^\top dX V)_{diag}+\\
  				&+\left(\frac{\partial L}{\partial V}-VD^\top U\Sigma\right):\left\{2V\left(K^\top\circ(\Sigma^\top U^\top dX V)_{sym}\right)\right\}\\
  				=& DV^\top:dX +\left(\frac{\partial L}{\partial \Sigma}-U^\top D\right)_{diag}:(U^\top dX V)+\\
  				&+2V^\top\left(\frac{\partial L}{\partial V}-VD^\top U\Sigma\right):\left(K^\top\circ(\Sigma^\top U^\top dX V)_{sym}\right)~~~~~~~~~~~~~~ \text{by \eqref{col:prod}, \eqref{col:diag}}\\
 				=&DV^\top:dX +U\left(\frac{\partial L}{\partial \Sigma}-U^\top D\right)_{diag}V^\top:dX+\\
  				&+2\left(K^\top\circ\left(V^\top\left(\frac{\partial L}{\partial V}-VD^\top U\Sigma\right)\right)\right)_{sym}:\Sigma^\top U^\top dX V~~~~~~~~ \text{by \eqref{col:sym},\eqref{col:circ}}\\
  				=&DV^\top:dX +U\left(\frac{\partial L}{\partial \Sigma}-U^\top D\right)_{diag}V^\top:dX+\\
  				&+2U\Sigma\left(K^\top\circ\left(V^\top\left(\frac{\partial L}{\partial V}-VD^\top U\Sigma\right)\right)\right)_{sym}V^\top:dX ~~~~~~~~~~~~~~~~~ \text{by \eqref{col:prod}}\\
\end{align*}
and so, since the last expression is equal to $\dfrac{\partial L\circ f}{\partial X}:dX$ by the chain rule,
\begin{equation}\label{dLdX}
 {\frac{\partial L\circ f}{\partial X} = DV^\top+U\left(\frac{\partial L}{\partial \Sigma}-U^\top D\right)_{diag}V^\top+2U\Sigma\left(K^\top\circ\left(V^\top\left(\frac{\partial L}{\partial V}-VD^\top U\Sigma\right)\right)\right)_{sym}V^\top}
\end{equation}
\end{proof}

The EIG is a layer that receives a matrix $X$ as input and produces a pair of matrices $U$ and $\Sigma$. Given our notation, this means $Y=f(X)=(U,\Sigma)$. The matrices satisfy the regular invariants of the eigen-decomposition \ie $X=U\Sigma U^\top$, $U^\top U = I$ and $\Sigma$ is a diagonal matrix. The following proposition identifies the variations of the outputs \ie $\mathcal{L}(dX)=dY=(dU,d\Sigma)$ and the partial derivative with respect to this layer $\dfrac{\partial L\circ f}{\partial X}$ as a function of the partial derivatives of the outputs $\dfrac{\partial L}{\partial Y}$ \ie $\dfrac{\partial L}{\partial U}$ and $\dfrac{\partial L}{\partial \Sigma}$. 

\begin{prop}[EIG Variations]Let $X=U\Sigma U^\top$ with $X\in\reals^{m,m}$, such that $U^\top U=I$ and $\Sigma$ possessing diagonal structure. Then \begin{equation}d\Sigma = (U^\top dX U)_{diag}\label{eqn:eig_dS}\end{equation}
and
\begin{equation} dU=U\left(K^\top\circ(U^\top dX U)\right)\end{equation} with
\begin{equation}
 \tilde{K}_{ij} = \begin{cases} \dfrac{1}{\sigma_i-\sigma_j}, &i\neq j\\
 0, &i=j
 \end{cases}
\end{equation}
The resulting partial derivatives are
\begin{equation}
\dfrac{\partial L\circ f}{\partial X}= U\left\{\left(\tilde{K}^\top \circ \left(U^\top\dfrac{\partial L}{\partial U}\right)\right) + \left(\dfrac{\partial L}{\partial\Sigma}\right)_{diag}\right\}U^\top
\end{equation}
\end{prop}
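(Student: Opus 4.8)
The plan is to follow the same two-step recipe used for the SVD proposition, exploiting the fact that the symmetric case is structurally simpler: there is a single orthogonal factor $U$, which is now square ($U\in\reals^{m\times m}$ with $UU^\top=U^\top U=I$), and $\Sigma$ is square diagonal, so no block decomposition into ``extra'' columns is needed. First I would take the first variation of $X=U\Sigma U^\top$, obtaining $dX = dU\Sigma U^\top + Ud\Sigma U^\top + U\Sigma dU^\top$. Left- and right-multiplying by $U^\top$ and $U$ and writing $R=U^\top dX U$, $A=U^\top dU$ gives $R = A\Sigma + d\Sigma + \Sigma A^\top$. The orthogonality constraint $U^\top dU + dU^\top U = 0$ makes $A$ antisymmetric (so $A^\top=-A$), whence $R = A\Sigma - \Sigma A + d\Sigma$.

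Since the commutator $A\Sigma-\Sigma A$ has zero diagonal (the diagonal entries of $A$ vanish by antisymmetry), the diagonal part of this equation yields $d\Sigma = R_{diag} = (U^\top dX U)_{diag}$, which is \eqref{eqn:eig_dS}. For the off-diagonal part I note that the commutator acts entrywise as $(A\Sigma - \Sigma A)_{ij} = a_{ij}(\sigma_j-\sigma_i)$, so $a_{ij}=\bar R_{ij}/(\sigma_j-\sigma_i)$ for $i\neq j$ and $a_{ii}=0$; this is exactly $A=\tilde K^\top\circ R$. Because $U$ is square orthogonal, $dU=UA=U(\tilde K^\top\circ(U^\top dX U))$, as claimed. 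The one genuinely new algebraic feature relative to the SVD derivation is that the denominators are now the \emph{linear} gaps $\sigma_i-\sigma_j$ rather than the quadratic gaps $\sigma_i^2-\sigma_j^2$; antisymmetry of $A$ also guarantees $U^\top dU+dU^\top U=0$ by construction, so the orthogonality invariant is preserved to leading order.

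For the second step I would expand the chain rule $\tfrac{\partial L\circ f}{\partial X}:dX = \tfrac{\partial L}{\partial U}:dU + \tfrac{\partial L}{\partial\Sigma}:d\Sigma$, substitute the two variations, and repeatedly transfer factors across the inner product using the identities from the SVD derivation: moving an orthogonal factor across ``$:$'' (so $A:BC=B^\top A:C=AC^\top:B$), the self-adjointness of the Hadamard product ($A:(\tilde K^\top\circ B)=(\tilde K^\top\circ A):B$), and the self-adjointness of the diagonal projection ($A:B_{diag}=A_{diag}:B$). The $\tfrac{\partial L}{\partial\Sigma}$ term transfers to $U(\tfrac{\partial L}{\partial\Sigma})_{diag}U^\top:dX$, while the $\tfrac{\partial L}{\partial U}$ term becomes $U(\tilde K^\top\circ(U^\top\tfrac{\partial L}{\partial U}))U^\top:dX$ after pulling $U$ through and applying Hadamard self-adjointness. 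Reading off the coefficient of $dX$ then gives the stated formula for $\tfrac{\partial L\circ f}{\partial X}$.

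The main point requiring care is the constraint handling rather than any hard computation. Because $X$ is symmetric, a consistent variation $dX$ is itself symmetric --- indeed the right-hand side $dU\Sigma U^\top + U\Sigma dU^\top + Ud\Sigma U^\top$ is automatically symmetric --- so strictly speaking the extracted gradient is the one paired against symmetric $dX$, and if one wants the \emph{projected} gradient onto the symmetric subspace the final expression should be symmetrized (note $U(\tilde K^\top\circ(U^\top\tfrac{\partial L}{\partial U}))U^\top$ need not be symmetric). I would flag this explicitly, as in the discussion of admissible variations preceding the SVD proof. The only other bookkeeping subtlety is collapsing the two antisymmetric matrices $A,B$ of the SVD case into the single commutator $A\Sigma-\Sigma A$; once that is set up, the remaining steps are mechanical transcriptions of the inner-product manipulations already carried out for the SVD layer.
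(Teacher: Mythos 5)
Your proof is correct and follows essentially the same route as the paper's: specialize the SVD variation identity to $U=V$, use antisymmetry of $A=U^\top dU$ to reduce the off-diagonal equation to the commutator $A\Sigma-\Sigma A=\bar R$, and then transfer factors across the ``$:$'' product via the same Hadamard/diagonal adjointness identities. Your explicit remark that the extracted gradient is paired against symmetric $dX$ and may need symmetrizing to be the projected gradient is a fair point that the paper's general methodology mentions but its EIG proof does not spell out.
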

\begin{proof}

First note that \eqref{svd_dSigma} still holds and with the notation above we have in this case $m=n$, $U=V$. This implies
\begin{equation}
{d\Sigma = (U^\top dX U)_{diag}}\label{eig_dSigma}
\end{equation}

Furthermore we have $A=B^\top$ ($A$, $B$ antisymmetric) and the off-diagonal part then satisfies $A\Sigma + \Sigma A^\top = R-R_{diag} $. In a similar process with the asymmetric case, we have
\begin{multline*}
 A\Sigma+\Sigma A^\top= R-R_{diag} \Rightarrow A\Sigma-\Sigma A= \bar{R}
  \Rightarrow \begin{cases} 
  a_{ij} \sigma_j - a_{ij} \sigma_i = \bar{R}_{ij}, & i\neq j\\
  a_{ij} = 0, & i=j\\
  \end{cases}
\end{multline*}
so that $A=\tilde{K}^\top\circ\bar{R}$ with 
\begin{equation}
 \tilde{K}_{ij} = \begin{cases} \dfrac{1}{\sigma_i-\sigma_j}, &i\neq j\\
 0, &i=j
 \end{cases}\label{svd_Ktilde}
\end{equation}
From this, we get then \begin{equation}{dU=U\left(\tilde{K}^\top\circ(U^\top dX U)\right)}\label{svd_dU}\end{equation} 
%Again, the symmetrization is so that $U^\top dU + dU^\top U=0$ by construction.

%For the second step of the derivation we can replace $dU$ and $d\Sigma$ with their expressions w.r.t.~$dX$:
Note that the chain rule in this case is $\dfrac{\partial L\circ f}{\partial X}:dX=\dfrac{\partial L}{\partial U}:dU+\dfrac{\partial L}{\partial \Sigma}:d\Sigma$, we can replace $dU$ and $d\Sigma$ with their expressions w.r.t.~$dX$ to obtain the partial derivatives

\begin{align*}
 \dfrac{\partial L}{\partial U}:dU + \dfrac{\partial L}{\partial \Sigma}:d\Sigma& = \dfrac{\partial L}{\partial U}:\left\{U\left(\tilde{K}^\top\circ\left(U^\top dX U\right)\right)\right\} + \frac{\partial L}{\partial \Sigma}:\left(U^\top dX U\right)_{diag}\\
  &= U\left(\tilde{K}^\top \circ \left(U^\top\dfrac{\partial L}{\partial U}\right)\right)U^\top: dX+ U\left(\dfrac{\partial L}{\partial \Sigma}\right)_{diag}U^\top:dX\\  
  \end{align*}
and so
\begin{equation}
\hspace{-10pt}{\dfrac{\partial L\circ f}{\partial X}= U\left\{\left(\tilde{K}^\top \circ \left(U^\top\dfrac{\partial L}{\partial U}\right)\right) + \left(\dfrac{\partial L}{\partial \Sigma}\right)_{diag}\right\}U^\top}\label{svd_dLdZ}
\end{equation}	
Note that \eqref{svd_dSigma}, \eqref{svd_dU}, \eqref{svd_Ktilde} and \eqref{svd_dLdZ} represent the desired quantities of the proposition.
\end{proof}

\subsection{Non-Linear Layers}
Using the SVD and EIG layers presented above we are now ready to produce layers like O$_2$P that involve matrix functions $\fun$, \eg~$\fun=\log$, but that are learned end-to-end. To see how, consider the SVD of some deep feature matrix $F=U\Sigma V^\top$ and notice that $\fun(F^\top F + \epsilon I) = \fun(V \Sigma^\top U^\top U \Sigma V^\top+\epsilon VV^\top)= V\fun(\Sigma^\top \Sigma+\epsilon I)V^\top$, where the last equality is obtained from the definition of matrix functions given that Schur decomposition and the eigen-decomposition coincide for real symmetric matrices\cite{Golub96}. Thus to implement the matrix function, we can create a new layer that receives the outputs of the SVD layer and produces $V\fun(\Sigma^\top \Sigma+\epsilon I)V^\top$, where $\fun$ is now applied element-wise to the diagonal elements of $\Sigma^\top \Sigma+\epsilon I$ thus is much easier to handle.

An SVD matrix function layer receives as input a tuple of 3 matrices $U$,$\Sigma$ and $V$ and produces the response $C=V \fun(\Sigma^\top\Sigma + \epsilon I) V^\top$, where $\fun$ is an analytic function and $\epsilon$ is a parameter that we consider fixed for simplicity. With the notation in section  \S\ref{sec:matrix_backpropagation} we have $X=(U,\Sigma,V)$ and $Y=f(X)=V \fun(\Sigma^\top\Sigma + \epsilon I) V^\top$. The following proposition gives the variations of the outputs are \ie $\mathcal{L}(dX)=dY=dC$ and the partial derivatives with respect to this layer $\dfrac{\partial L\circ f}{\partial X}$ \ie $\left(\dfrac{\partial L\circ f}{\partial V},\dfrac{\partial L\circ f}{\partial \Sigma}\right)$ as a function of the partial derivatives of the outputs $\dfrac{\partial L}{\partial C}$. Note that the layer does not depend on $U$ so its derivative $\dfrac{\partial L}{\partial U}=0$. 

\begin{prop}[SVD matrix function]
An (analytic) matrix function of a diagonalizable matrix $A=V\Sigma V^\top$ can be written as $\fun(A)=V \fun(\Sigma) V^\top$. Since $\Sigma$ is diagonal this is equivalent to applying $\fun$ element-wise to $\Sigma$'s diagonal elements. Combining this idea with the SVD decomposition $F=U\Sigma V^\top$, our matrix function can be written as $C=\fun(F^\top F+\epsilon I)=V\fun(\Sigma^\top\Sigma+\epsilon I)V^\top$.

Then the variations are
\begin{align*}
dC &= 2\left(dV~\fun(\Sigma^\top\Sigma+\epsilon I)V^\top\right)_{sym} + 2\left(V \fundev(\Sigma^\top\Sigma+\epsilon I)\Sigma^\top d\Sigma V^\top\right)_{sym}
\end{align*}
and the partial derivatives are
\begin{equation}\label{eqn:dLdV}
  \frac{\partial L\circ f}{\partial V} = 2\left(\frac{\partial L}{\partial C}\right)_{sym}V \fun(\Sigma^\top\Sigma+\epsilon I)
\end{equation}
and
\begin{equation}\label{eqn:dLdS}
  \frac{\partial L\circ f}{\partial\Sigma} = 2\Sigma\fundev(\Sigma^\top\Sigma+\epsilon I) V^\top\left(\frac{\partial L}{\partial C}\right)_{sym}V
\end{equation}
\end{prop}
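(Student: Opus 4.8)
The plan is to follow the two-step matrix-backpropagation recipe of \S\ref{sec:matrix_backpropagation}: first compute the forward variation $dC=\mathcal{L}(dX)$, then extract the adjoint to read off $\frac{\partial L\circ f}{\partial V}$ and $\frac{\partial L\circ f}{\partial\Sigma}$. Throughout I abbreviate $M=\Sigma^\top\Sigma+\epsilon I$, which is diagonal with entries $\sigma_i^2+\epsilon$, and I use the admissible-variation constraint inherited from the SVD layer that $\Sigma$ (hence $d\Sigma$) is diagonal.

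For Step 1 I would differentiate $C=V\fun(M)V^\top$ by the product rule into three terms: $dV\,\fun(M)V^\top$, $V\,d(\fun(M))\,V^\top$, and $V\,\fun(M)\,dV^\top$. Because $\fun(M)$ is diagonal (hence symmetric), the first and third terms are transposes of one another, so they combine as $dV\,\fun(M)V^\top+(dV\,\fun(M)V^\top)^\top=2(dV\,\fun(M)V^\top)_{sym}$. The only genuinely nontrivial piece is $d(\fun(M))$: here the key simplification is that $M$ is diagonal and that $dM=d\Sigma^\top\Sigma+\Sigma^\top d\Sigma=2\Sigma^\top d\Sigma$ is also diagonal (since $\Sigma,d\Sigma$ are). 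Thus $\fun$ is being varied along a path of diagonal matrices and acts entrywise, so $d(\fun(M))=\fundev(M)\,dM=2\fundev(M)\Sigma^\top d\Sigma$ with the ordinary scalar derivative on each diagonal entry, which avoids the divided-difference (off-diagonal) corrections that would arise for a general symmetric argument. Since $\fundev(M)\Sigma^\top d\Sigma$ is diagonal, the middle term $V\fundev(M)\Sigma^\top d\Sigma V^\top$ is already symmetric, so writing it as $2(V\fundev(M)\Sigma^\top d\Sigma V^\top)_{sym}$ is harmless and yields exactly the claimed $dC$.

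For Step 2 I would invoke the chain rule $\frac{\partial L}{\partial C}:dC=\frac{\partial L\circ f}{\partial V}:dV+\frac{\partial L\circ f}{\partial\Sigma}:d\Sigma$ (noting $U$ is absent, so $\frac{\partial L}{\partial U}=0$) and substitute the $dC$ from Step 1. Two moves finish it: (i) transfer each symmetrization onto $\frac{\partial L}{\partial C}$ using $A:B_{sym}=A_{sym}:B$, which is precisely what produces the factor $(\frac{\partial L}{\partial C})_{sym}$ in both results; and (ii) use the cyclic/transpose identities of the colon product, namely $A:BCD=B^\top A D^\top:C$ and its specializations, to peel the surrounding factors off $dV$ and off $d\Sigma$. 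Isolating the coefficient of $dV$ gives $2(\frac{\partial L}{\partial C})_{sym}V\fun(M)$, and isolating that of $d\Sigma$ gives $2\Sigma\fundev(M)V^\top(\frac{\partial L}{\partial C})_{sym}V$, using again that $\fun(M),\fundev(M)$ are diagonal, hence symmetric, so their transposes are themselves.

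I expect the main obstacle to be the variation of the matrix function, $d(\fun(M))$: one has to recognize and justify that the diagonal structure of $M$ and $dM$ reduces it to an entrywise scalar derivative, rather than the full Daleckii--Krein formula that would apply to an arbitrary symmetric perturbation. A secondary point of care is the bookkeeping in Step 2, correctly applying the symmetry-transfer and colon-product transpose rules so that the factors land in the right order. Finally, I would note that, strictly, $d\Sigma$ ranges only over diagonal matrices, so the coefficient of $d\Sigma$ is determined only up to its off-diagonal part; the stated $\frac{\partial L\circ f}{\partial\Sigma}$ omits the projection $(\cdot)_{diag}$, which is harmless because the downstream SVD layer (Proposition \ref{prop:svd}) couples to $\frac{\partial L}{\partial\Sigma}$ only through its diagonal.
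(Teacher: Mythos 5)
Your proposal is correct and follows essentially the same route as the paper: the same product-rule expansion of $dC$ with the entrywise (diagonal) treatment of $d\fun(\Sigma^\top\Sigma+\epsilon I)$, and the same symmetry-transfer and colon-product manipulations to read off the adjoint. Your closing remarks on why the Daleckii--Krein corrections are unnecessary here and on the implicit diagonal projection of $\frac{\partial L\circ f}{\partial\Sigma}$ are accurate refinements of points the paper leaves tacit.
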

\begin{proof}
Using the fact that for a positive diagonal matrix $A$ and a diagonal variation $dA$, $\fun(A+dA) = \fun(A) + \fundev(A) dA + \operatorname{O}(\Vert dA\Vert^2)$, we can write \begin{align*}
dC &= 2\left(dV\fun(\Sigma^\top\Sigma+\epsilon I)V\right)_{sym} + 2\left(V\fundev(\Sigma^\top\Sigma+\epsilon I)\Sigma^\top d\Sigma V^\top\right)_{sym}
\end{align*}

The total variation $dL$ of an expression of the form $L=\fun(C)$, $\fun:\reals^{n\times n}\rightarrow\reals^{n\times n}$, can then be written as:
\begin{align*}
 \frac{\partial L}{\partial C}:dC =& \frac{\partial L}{\partial C}:\left\{2\left(dV\fun(\Sigma^\top\Sigma+\epsilon I)V^\top\right)_{sym} + 2\left(V\fundev(\Sigma^\top\Sigma+\epsilon I)\Sigma^\top d\Sigma V^\top\right)_{sym} \right\} \\
 				=& 2\left(\frac{\partial L}{\partial C}\right)_{sym}:(dV\fun(\Sigma^\top\Sigma+\epsilon I)V^\top)  + 2\left(\frac{\partial L}{\partial C}\right)_{sym}:(V\fundev(\Sigma^\top\Sigma+\epsilon I)\Sigma^\top d\Sigma V^\top) ~~~~~~~~~~~~\text{by \eqref{col:sym}}\\
				=& 2\left\{\left(\frac{\partial L}{\partial C}\right)_{sym}V \fun(\Sigma^\top\Sigma+\epsilon I)\right\}:dV + 2\left\{\Sigma\fundev(\Sigma^\top\Sigma+\epsilon I) V^\top\left(\frac{\partial L}{\partial C}\right)_{sym}V\right\}:d\Sigma ~~~~~~~~\text{by \eqref{col:prod}}
\end{align*}

By the chain rule, we must have 
\begin{multline}\label{fun_cov_partials}
 \frac{\partial L}{\partial C}:dC = \frac{\partial L\circ f}{\partial V}:dV + \frac{\partial L\circ f}{\partial \Sigma}:d\Sigma  \Rightarrow \begin{cases}
  {\frac{\partial L\circ f}{\partial V} = 2\left(\frac{\partial L}{\partial C}\right)_{sym}V \fun(\Sigma^\top\Sigma+\epsilon I)}\\
  {\frac{\partial L\circ f}{\partial \Sigma} = 2\Sigma\fundev(\Sigma^\top\Sigma+\epsilon I) V^\top\left(\frac{\partial L}{\partial C}\right)_{sym}V}
 \end{cases}
\end{multline}
\end{proof}

%Plugging in \eqref{fun_cov_partials} in \eqref{dLdX}, gives  $\dfrac{\partial L}{\partial X}$ as a function of $\dfrac{\partial L}{\partial C}$.

Similarly the EIG matrix function layer receives as input a pair of matrices $U$ and $Q$ and produces the response $C=U \fun(Q) U^\top$. With the notation from  \S\ref{sec:matrix_backpropagation} we have $X=(U,Q)$ and $Y=f(X)=U \fun(Q) U^\top$. Note that if the inputs obey the invariants of the EIG decomposition of some real symmetric matrix $Z=UQU^\top$ \ie $U$ are the eigenvectors and $Q$ the eigenvalues, then the layer produces the result of the matrix function $C=\fun(Z)$. This holds for similar reasons as above $\fun(Z)=\fun(UQU^\top)=U\fun(Q)U^\top$, since in this case the Schur decomposition coincides with the eigen-decomposition \cite{Golub96}. The following proposition shows what the variations of the outputs are in this case \ie $\mathcal{L}(dX)=dY=dC$ and what the partial derivatives with respect to this layer are $\dfrac{\partial L\circ f}{\partial X}$ \ie $\left(\dfrac{\partial L\circ f}{\partial U},\dfrac{\partial L\circ f}{\partial Q}\right)$ as a function of the partial derivatives of the outputs $\dfrac{\partial L}{\partial C}$. 

\begin{prop}[EIG matrix function]
Let $Z = U Q U^\top$ by way of eigen-decomposition (symmetric SVD),
%\OV{maybe change $U$ to say $W$, so that we don't get confused with SVD version?} 
with $Z\in S_+(m)$ an $m\times m$ real symmetric matrix. Then $Q\in \mathbb{R}^{m\times m}$ is diagonal (the \emph{strictly positive} eigenvalues) and $U\in\mathbb{R}^{m\times m}$ is orthogonal (the eigenvectors). Denote with $C = \fun(Z)= U\fun(Q)U^\top$
Then the variations of $C$ are given by
\begin{equation}
dC = 2(dU \fun(Q)U^\top)_{sym}+U\fundev(Q)dQU^\top \label{eqn:eigfun_variations}
\end{equation}
and the partial derivatives are 
\begin{align}
\dfrac{\partial L\circ f}{\partial U} = 2\left(\dfrac{\partial L}{\partial C}\right)_{sym}U\fun(Q)\label{eqn:dLdU}\\
\dfrac{\partial L\circ f}{\partial Q} = \fundev(Q)U^\top\dfrac{\partial L}{\partial C}U\label{eqn:dLdQ}
\end{align}
\end{prop}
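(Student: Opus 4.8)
The plan is to follow exactly the two-step template already used for the SVD matrix function layer, which is especially light here because $U$ and $Q$ enter the map $C=U\fun(Q)U^\top$ directly as inputs: the differentiation through the eigendecomposition itself has already been absorbed into the EIG variations proposition, so nothing nonsmooth (no $\tilde K$-type resolvent, no solving for antisymmetric blocks) remains. First I would compute the variation $dC$ by the product rule applied to $C=U\fun(Q)U^\top$, producing the three terms
\[ dC = dU\,\fun(Q)U^\top + U\,d[\fun(Q)]\,U^\top + U\fun(Q)\,dU^\top . \]
The middle term is evaluated with the scalar fact (used identically in the SVD case) that for diagonal $Q$ and diagonal variation $dQ$, $\fun(Q+dQ)=\fun(Q)+\fundev(Q)dQ+\operatorname{O}(\Vert dQ\Vert^2)$, so $d[\fun(Q)]=\fundev(Q)dQ$; since this is diagonal, the term $U\fundev(Q)dQU^\top$ is already symmetric and is left as is. The key observation is that the first and third terms are transposes of one another: because $\fun(Q)$ is diagonal, hence symmetric, $(dU\,\fun(Q)U^\top)^\top = U\fun(Q)\,dU^\top$, so their sum collapses to $2(dU\,\fun(Q)U^\top)_{sym}$. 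This gives \eqref{eqn:eigfun_variations}.

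Second, I would substitute this $dC$ into the chain-rule identity $\frac{\partial L}{\partial C}:dC = \frac{\partial L\circ f}{\partial U}:dU + \frac{\partial L\circ f}{\partial Q}:dQ$ and read off the two partial derivatives using the adjoint inner-product identities from the appendix. For the $dU$ term I would transfer the symmetrization onto $\frac{\partial L}{\partial C}$ via \eqref{col:sym}, giving $2(\frac{\partial L}{\partial C})_{sym}:(dU\,\fun(Q)U^\top)$, and then apply the cyclic/product identity \eqref{col:prod}, namely $A:(BXC)=(B^\top A C^\top):X$ with $X=dU$, using the symmetry of $\fun(Q)$ to collect the coefficient $2(\frac{\partial L}{\partial C})_{sym}U\fun(Q)$ of $dU$; this is \eqref{eqn:dLdU}. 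For the $dQ$ term I would apply \eqref{col:prod} to peel the two copies of $U$, obtaining $U^\top\frac{\partial L}{\partial C}U:(\fundev(Q)dQ)$, and then commute the diagonal (hence symmetric) factor $\fundev(Q)$ across the inner product to isolate $dQ$, yielding the coefficient $\fundev(Q)U^\top\frac{\partial L}{\partial C}U$; this is \eqref{eqn:dLdQ}.

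The computation is almost entirely mechanical, so the only thing to get right is the bookkeeping in the adjoint step: correctly pushing the symmetrization onto $\frac{\partial L}{\partial C}$ and threading the orthogonal factors through \eqref{col:prod} without transposition errors. One subtlety worth flagging is that $dQ$ is constrained to be diagonal in order to preserve the diagonal structure of $Q$, so strictly only the diagonal part of the stated coefficient $\fundev(Q)U^\top\frac{\partial L}{\partial C}U$ is actually recovered by the inner product; the expression is written in full because contracting it against a diagonal $dQ$ automatically selects that diagonal part, exactly as happens with $d\Sigma$ in the preceding propositions.
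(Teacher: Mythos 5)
Your proposal is correct and follows essentially the same route as the paper's own proof: the same product-rule expansion of $dC$ with the two $dU$ terms collapsed into a symmetrized pair, followed by the same adjoint step using \eqref{col:sym} and \eqref{col:prod} to read off the coefficients of $dU$ and $dQ$. Your closing remark about $dQ$ being constrained to the diagonal is a sound observation that the paper leaves implicit.
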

\begin{proof}
The variation of $C$ is 
\begin{multline*}
dC = dU \fun (Q) U^\top + U d\fun(Q) U^\top + U \fun(Q) dU^\top
\Rightarrow {dC=2(dU \fun(Q)U^\top)_{sym}+U\fundev(Q)dQU^\top}
\end{multline*}

We consider the variation of $L$,
\begin{align*}
\dfrac{\partial L}{\partial C}:dC &= \dfrac{\partial L}{\partial C}:\left\{ 2(dU \fun(Q)U^\top)_{sym}+U\fundev(Q)dQU^\top \right\}\\
				 &= \fundev(Q)U^\top\dfrac{\partial L}{\partial C}U :dQ + 2\left(\dfrac{\partial L}{\partial C}\right)_{sym}U\fun(Q):dU
\end{align*}				 
By the chain rule, we must have 
\begin{align*}
 \dfrac{\partial L}{\partial C}:dC = \dfrac{\partial L\circ f}{\partial U}:dU + \frac{\partial L\circ f}{\partial Q}:dQ  \Rightarrow \begin{cases}
  {\dfrac{\partial L\circ f}{\partial U} = 2\left(\dfrac{\partial L}{\partial C}\right)_{sym}U\fun(Q)}\\
  {\dfrac{\partial L\circ f}{\partial Q} = \fundev(Q)U^\top\dfrac{\partial L}{\partial C}U}
 \end{cases}
\end{align*}

\end{proof}
%Since $Z$ is symmetric, these can be plugged in to \eqref{svd_dLdZ} with $X=Z$ and $\Sigma=Q$.

Now it is trivial to derive two versions of the O$_2$P descriptor \eqref{eqn:deepo2p_svd} by plugging in $\log$ and its derivative in the propositions above.
\begin{Corollary}[DeepO$_2$P]
Deep O$_2$P layers can be implemented and have the following backpropagation rules
\begin{enumerate}
\item DeepO$_2$P-SVD:
\begin{equation}
 \frac{\partial L\circ f}{\partial V} = 2\left(\frac{\partial L}{\partial C}\right)_{sym}V \log(\Sigma^\top\Sigma+\epsilon I)\text{~~and~~}\frac{\partial L\circ f}{\partial \Sigma} = 2\Sigma(\Sigma^\top\Sigma+\epsilon I)^{-1} V^\top\left(\frac{\partial L}{\partial C}\right)_{sym}V
\end{equation}
\item DeepO$_2$P-EIG:
\begin{equation}
{\dfrac{\partial L\circ f}{\partial U} = 2\left(\dfrac{\partial L}{\partial C}\right)_{sym}U\log(Q)}\text{~~and~~}{\dfrac{\partial L\circ f}{\partial Q} = Q^{-1}\left(U^\top\dfrac{\partial L}{\partial C}U\right)}
\end{equation}

\end{enumerate}
\end{Corollary}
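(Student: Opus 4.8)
The plan is to obtain both sets of rules as immediate specializations of the two matrix-function propositions, using the single substitution $\fun=\log$ together with the elementary scalar identity $\log'(x)=1/x$. No fresh variational calculation is required: all the structural work --- propagating through $C=V\fun(\Sigma^\top\Sigma+\epsilon I)V^\top$ and through $C=U\fun(Q)U^\top$, symmetrizing $\partial L/\partial C$, and applying the chain rule --- has already been carried out, yielding \eqref{eqn:dLdV}--\eqref{eqn:dLdS} for the SVD matrix function layer and \eqref{eqn:dLdU}--\eqref{eqn:dLdQ} for the EIG matrix function layer. The corollary is therefore a pure instantiation of those results.

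First I would check that $\log$ meets the analyticity hypothesis of those propositions and, more importantly, that its arguments remain in the domain where both $\log$ and $\log'$ are finite. Here the regularizer $\epsilon I$ is exactly what guarantees this: $\Sigma^\top\Sigma+\epsilon I$ is a strictly positive diagonal matrix, every diagonal entry being at least $\epsilon>0$, and in the EIG case $Q$ collects the strictly positive eigenvalues. Thus $\log$ is applied element-wise to strictly positive numbers and no singularity around $0$ arises, which matches the motivation given just after \eqref{eqn:deepo2p_svd}.

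Next, for the SVD variant, since $\fun$ acts element-wise on the diagonal of the positive matrix $\Sigma^\top\Sigma+\epsilon I$, its induced matrix derivative is likewise element-wise and equals the reciprocal, giving $\fundev(\Sigma^\top\Sigma+\epsilon I)=(\Sigma^\top\Sigma+\epsilon I)^{-1}$. Substituting $\fun=\log$ together with this expression for $\fundev$ into \eqref{eqn:dLdV} and \eqref{eqn:dLdS} reproduces the two DeepO$_2$P-SVD formulas verbatim. For the EIG variant the same scalar identity gives $\fundev(Q)=Q^{-1}$, and plugging $\fun=\log$, $\fundev(Q)=Q^{-1}$ into \eqref{eqn:dLdU} and \eqref{eqn:dLdQ} yields the DeepO$_2$P-EIG formulas directly, the parentheses in $Q^{-1}(U^\top\tfrac{\partial L}{\partial C}U)$ being mere grouping.

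I do not anticipate any genuine obstacle: the statement is a one-line substitution in each of the two cases. The only point requiring care --- and the only place the argument could conceivably fail --- is domain validity, namely ensuring that the singular values and eigenvalues fed to $\log$ stay bounded away from zero so that the reciprocal $\log'(x)=1/x$ is well defined; this is precisely the role of $\epsilon$, so once that observation is recorded the corollary follows immediately.
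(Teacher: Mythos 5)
Your proposal is correct and matches the paper's own proof, which likewise obtains the corollary by substituting $\fun=\log$ and $\fundev(A)=A^{-1}$ into \eqref{eqn:dLdV}--\eqref{eqn:dLdS} and \eqref{eqn:dLdU}--\eqref{eqn:dLdQ}. Your added remark on domain validity (positivity of $\Sigma^\top\Sigma+\epsilon I$ and of $Q$) is a sensible, if implicit in the paper, sanity check.
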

\begin{proof}
If $\fun(A) = \log(A)$ then $\fundev(A)=A^{-1}$. Plugging these into \eqref{eqn:dLdV} and \eqref{eqn:dLdS} we obtain the DeepO$_2$P-SVD derivatives above. Similarly, plugging into \eqref{eqn:dLdU} and \eqref{eqn:dLdQ} gives the DeepO$_2$P-EIG derivatives.
\end{proof}

\section{Normalized Cuts Layers}\label{sec:scl}

A central computer vision and machine problem is grouping, segmentation or clustering, \ie~discovering which datapoints (or pixels) belong to one of several partitions. A successful approach to clustering is \emph{normalized cuts}. Let $m$ be the number of pixels in the image and let $V=\{1,\dots,m\}$ be the set of indices. We want to compute a partition $\mathcal{P}=\{P_1,\ldots,P_k\}$, where $k=|\mathcal{P}|$, $P_i\subset V$, $\bigcup_i P_i=V$ and $P_j\bigcap P_i=\varnothing$. This is equivalent to producing a matrix $E\in\{0,1\}^{m\times k}$ such that $E(i,j)=1$ if $i\in P_j$ and 0 otherwise. Let $F\in\reals^{m\times d}$ be a data feature matrix with descriptor of size $d$ and let $W$ be a data similarity matrix with positive entries. For simplicity we consider $W=F\Lambda F^\top$, where $\Lambda$ is a $d\times d$ parameter matrix. Note that one can also apply other global non-linearities on top of the segmentation layer, as presented in the previous section. Let $D=[W\one]$, where $[\bv]$ is the diagonal matrix with main diagonal $\bv$, \ie the diagonal elements of $D$ are the sums of the corresponding rows of $W$.
The \emph{normalized cuts criterion} is then
\begin{equation}
C(W,E)=\Tr(E^\top W E (E^\top D E)^{-1})\label{eqn:nc}
\end{equation}
Finding the matrix $E$ that minimizes $C(W,E)$ is equivalent to finding a partition that minimizes the cut energy but penalizes unbalanced solutions. 

It is easy to show\cite{bach06} that $C(W,E)=k-\Tr(Z^\top D^{-1/2}WD^{-1/2}Z)$, where $Z$ is such that: \textit{a)} $Z^\top Z = I$, and \textit{b)} $D^{1/2}Z$ is piecewise constant with respect to $E$ (\ie~it is equal to $E$ times some scaling for each column). Ignoring the second condition we obtain a relaxed problem that can be solved, due to Ky Fan theorem, by an eigen-decomposition of
\begin{equation}
M=D^{-1/2}WD^{-1/2}\label{seg:def_M}
\end{equation}
\cite{bach06} propose to learn the parameters $\Lambda$ such that $D^{1/2}Z$ is piecewise constant because then, solving the relaxed problem is equivalent to the original problem. In \cite{bach06} the input features were fixed, thus $\Lambda$ are the only parameters to permit the alignment. This is not our case, as we place a global objective on top of convolutional network inputs. We can can therefore take leverage the network parameters in order to change $F$ directly, thus training the bottom layers to produce a representation that is appropriate for normalized cuts.

To obtain a $Z$ that is piecewise constant with respect to $D^{1/2}E$ we can align the span of $M$ with that of $\Omega=D^{1/2}E E^\top D^{1/2}$. For this we can use projectors $\Pi_A$ of the corresponding space spanned by $A$, where $\Pi_A=AA^+$ is an orthogonal projector and $A^+$ is the Moore-Penrose inverse of $A$. The alignment is achieved by minimizing the Frobenius norm of the projectors associated to the the model prediction $\Pi_M$ and the desired output $\Pi_\Omega$, respectively
\begin{equation}
J_1(W,E)=\frac{1}{2}\left\Vert \Pi_M-\Pi_\Omega\right\Vert_{F}^2\label{seg:align}
\end{equation}
Notice that while our criterion $J_1$ is superficially similar to the one in \cite{bach06}, there are important differences. \cite{bach06} truncate the spectrum and consider only the eigenspace corresponding to the largest eigenvalues at the cost of (potentially) making the criterion non-differentiable. In contrast, we consider the entire eigenspace and rely on projectors (and only indirectly on eigen-decomposition) aiming to also learn the dimensionality of the space in the process.

We will obtain the partial derivatives of an objective with respect to the matrices it depends on, relying on matrix backpropagation. Since the projectors will play a very important role in a number of different places in this section we will treat them separately.

Consider a layer that takes a matrix $A$ as input and produces its corresponding orthogonal projector $\Pi_A=AA^+$. In the notation of section \ref{sec:matrix_backpropagation}, $X=A$ and $Y=f(A)=\Pi_A$. The following proposition gives the variations of the outputs \ie $\mathcal{L}(dX)=dY=d\Pi_A$ and the partial derivative with respect to the layer $\dfrac{\partial L\circ f}{\partial X}$ as a function of the partial derivatives of the outputs $\frac{\partial L}{\partial Y}$, \ie  $\dfrac{\partial L}{\partial \Pi_A}$. 

\begin{lemma}\label{lem:proj}
Consider a symmetric matrix $A$ and its orthogonal projection operator $\Pi_A$. If $dA$ is a symmetric variation of $A$ then
\begin{equation}
d\Pi_A = 2\left((I-\Pi_A)dA A^+\right)_{sym}\label{seg:proj})
\end{equation}
and 
\begin{equation}
\dfrac{\partial L\circ f}{\partial A} = 2(I-\Pi_A)\left(\dfrac{\partial L}{\partial \Pi_A}\right)_{sym} A^+\label{seg:proj_der}
\end{equation}
\end{lemma}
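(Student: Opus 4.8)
The plan is to follow the two steps of the methodology from \S\ref{sec:matrix_backpropagation}: first obtain the variation $d\Pi_A$, then extract the adjoint to read off $\frac{\partial L\circ f}{\partial A}$. Throughout I would use three elementary facts about the orthogonal projector of a symmetric matrix: that $\Pi_A$ is itself symmetric and idempotent ($\Pi_A^2=\Pi_A$), that it fixes the range of $A$ so that $\Pi_A A = A$, and that $A^+$ is symmetric whenever $A$ is. Because $\Pi_A$ is symmetric for every symmetric $A$, its variation $d\Pi_A$ is symmetric as well, and this is the structural invariant I will exploit.

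For step one I would differentiate the identity $\Pi_A A = A$ to get $d\Pi_A\, A + \Pi_A\, dA = dA$, hence $d\Pi_A\, A = (I-\Pi_A)\,dA$. Right-multiplying by $A^+$ and using $AA^+ = \Pi_A$ yields $d\Pi_A\,\Pi_A = (I-\Pi_A)\,dA\,A^+$, which determines the range block of $d\Pi_A$. To recover the complementary block I would differentiate $\Pi_A^2=\Pi_A$, obtaining $\Pi_A\,d\Pi_A + d\Pi_A\,\Pi_A = d\Pi_A$; combined with the symmetry of $d\Pi_A$ this gives $\Pi_A\,d\Pi_A = (d\Pi_A\,\Pi_A)^\top = A^+\,dA\,(I-\Pi_A)$, where the last equality uses that $A^+$, $dA$ and $I-\Pi_A$ are all symmetric. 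Adding the two blocks produces $d\Pi_A = (I-\Pi_A)\,dA\,A^+ + A^+\,dA\,(I-\Pi_A) = 2\big((I-\Pi_A)\,dA\,A^+\big)_{sym}$, which is \eqref{seg:proj}.

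For step two I would invoke the chain rule $\frac{\partial L\circ f}{\partial A}:dA = \frac{\partial L}{\partial \Pi_A}:d\Pi_A$, substitute the variation just found, and manipulate the inner product. Moving the symmetrizer across the product (the identity $B:(C)_{sym}=(B)_{sym}:C$) turns the right-hand side into $2\big(\frac{\partial L}{\partial \Pi_A}\big)_{sym}:(I-\Pi_A)\,dA\,A^+$, and then the adjoint-of-a-product rule $B:(PXQ)=(P^\top B\, Q^\top):X$, together with the symmetry of $I-\Pi_A$ and $A^+$, pulls these factors onto the derivative, giving $2(I-\Pi_A)\big(\frac{\partial L}{\partial \Pi_A}\big)_{sym}A^+:dA$. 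Reading off the coefficient of $dA$ delivers \eqref{seg:proj_der}.

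The main obstacle I anticipate is step one: right-multiplication by $A^+$ only pins down $d\Pi_A\,\Pi_A$, i.e.~the action of $d\Pi_A$ on the range of $A$, so the variation is not determined by the differentiated defining relation alone. The crux is to notice that the missing co-range block is forced by two invariants acting \emph{together} --- the idempotency constraint $\Pi_A\,d\Pi_A+d\Pi_A\,\Pi_A=d\Pi_A$ and the symmetry of $d\Pi_A$ --- rather than by any further differentiation. A secondary point requiring care is that $A$ may be rank-deficient, so $AA^+=\Pi_A\neq I$ in general; every step must respect the pseudo-inverse identities $AA^+A=A$ and $A^+AA^+=A^+$ rather than treating $A^+$ as a true inverse, and the restriction to symmetric variations $dA$ means the stated derivative is the representative whose contraction against symmetric $dA$ is the effective gradient.
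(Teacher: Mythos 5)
Your proof is correct and follows essentially the same route as the paper: both differentiate the identities $\Pi_A A = A$ and $\Pi_A^2=\Pi_A$ and use the symmetry of $d\Pi_A$ (together with $AA^+=\Pi_A$ and the symmetry of $A^+$) to determine the variation, then transpose the inner product to read off the adjoint. The only difference is cosmetic --- the paper posits an explicit four-block ansatz $d\Pi = \Pi M\Pi + (I-\Pi)Q\Pi + \Pi Q^\top(I-\Pi) + (I-\Pi)R(I-\Pi)$ and eliminates $M$ and $R$, whereas you solve directly for the two off-diagonal blocks $d\Pi_A\Pi_A$ and $\Pi_A d\Pi_A$; your version is slightly more economical but logically identical.
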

\begin{proof}
(We drop the subscript of $\Pi_A$ for brevity.) Taking the variation of the basic properties of the projector $\Pi^2=\Pi$ and $\Pi A=A$, we have
\begin{gather}
 d\Pi \Pi + \Pi d\Pi = d\Pi\\
 d\Pi A + \Pi dA = dA
\end{gather}
We then consider the following decomposition of $d\Pi$
\begin{equation*}
 d\Pi = \Pi M\Pi + (I-\Pi)Q\Pi + \Pi Q^\top(I-\Pi) + (I-\Pi)R(I-\Pi)
\end{equation*}
with $M$ and $R$ symmetric, so that $d\Pi$ is symmetric by construction. Plugging into the equations above, we obtain
\begin{gather*}
 2\Pi M\Pi + (I-\Pi)Q\Pi + \Pi Q^\top(I-\Pi) = d\Pi\\
 \Pi M A + (I-\Pi)Q A = (I-\Pi)dA
\end{gather*}
Comparing the first equation with the decomposition of $d\Pi$ above, we infer that $M=R=0$, and so
\begin{gather*}
(I-\Pi)Q\Pi + \Pi Q^\top(I-\Pi) = d\Pi\\
(I-\Pi)Q A = (I-\Pi)dA
\end{gather*}
Multiplying the second equation with $A^+$ at the right hand side gives $(I-\Pi)Q \Pi = (I-\Pi)dA A^+$. Plugging this into the first equation gives the desired result for the variations.

Now we can calculate the partial derivative
\begin{align*}
\dfrac{\partial L}{\partial \Pi_A}:d\Pi_A&=2\left(\dfrac{\partial L}{\partial \Pi_A}\right)_{sym}:((I-\Pi_A)dA A^+) ~~~~~~~~~~~~~~~~~~~~~~~~~~~~~~~~~~~~~~~~~~~~~~~~~~~~~~~~~~~~~~~~~~~~~~~~~~~~~~~~~~~~~~~~~~\text{by }\eqref{seg:proj}\\
  &=2(I-\Pi_A)\left(\dfrac{\partial L}{\partial \Pi_A}\right)_{sym}A^+:dA
\end{align*}
and so \begin{equation}\frac{\partial L\circ f}{\partial A} = 2(I-\Pi_A)\left(\dfrac{\partial L}{\partial \Pi_A}\right)_{sym} A^+\end{equation} 

\end{proof}

The derivation relies only on basic properties of the projector with respect to itself and its matrix: $\Pi_A^2=\Pi_A$ (idempotency of the projector) and $\Pi_A A=A$ (projector leaves the original space unchanged). Note that since $\Pi_A=AA^+$, there exists a non-trivial spectral decomposition in training, although it is `notationally' hidden under $A^+$, which nevertheless requires an SVD computation.
%(where $A^+$ is the Moore-Penrose inverse). 

From the perspective of matrix backpropagation we split the computation of $J_1$ into the following 4 layers $F\rightarrow W\rightarrow (M,\Omega)\rightarrow (\Pi_M,\Pi_\Omega)\rightarrow J_1$. We consider them in reverse order from the objective down to the inputs. First the derivative of the Frobenius norm is well known\cite{matrixcookbook} so $\dfrac{\partial J_1}{\partial \Pi_M}=\Pi_M-\Pi_\Omega$ and $\dfrac{\partial J_1}{\partial \Pi_\Omega}=\Pi_\Omega-\Pi_M$. Then we focus on the layer taking as inputs $M$ or $\Omega$ and producing the corresponding projectors \ie $\Pi_M$ and $\Pi_\Omega$. These derivatives are obtained by applying Lemma \ref{lem:proj}. 

Subsequently, we consider the layer receiving $(W,E)$ as inputs and producing $(M,\Omega)$. Under the notation introduced in \S\ref{sec:matrix_backpropagation}, $L=J_1$, $X=(W,E)$ and $Y=f(X)=(M,\Omega)$ as defined above. The following proposition gives the variations of the outputs \ie $\mathcal{L}(dX)=dY=(dM,d\Omega)$ and the partial derivative with respect to the layer $\dfrac{\partial L\circ f}{\partial X}$ as a function of the partial derivatives of the outputs $\dfrac{\partial L}{\partial Y}$\ie $\left(\dfrac{\partial L}{\partial M},\dfrac{\partial L}{\partial \Omega}\right)$.
\begin{prop}
With the notation above, the variations of $M$ and $\Omega$ are 
\begin{equation}
d\Omega=\left(\Omega D^{-1}[dW\one]\right)_{sym}
\end{equation}
and 
\begin{equation}
dM=D^{-1/2} dW D^{-1/2} - \left(M D^{-1}[dW\one]\right)_{sym}
\end{equation}
and the partial derivative of $J_1$ with respect to $W$ is
\begin{align*}
\frac{\partial J_1\circ f}{\partial W} &= D^{-1/2}\frac{\partial J_1}{\partial M} D^{-1/2} + \diag\left(D^{-1}\Omega\left(\frac{\partial J_1}{\partial \Omega}\right)_{sym} - D^{-1}M\left(\frac{\partial J_1}{\partial M}\right)_{sym}\right)\one^\top\label{seg:dj1_dw}
\end{align*}
\end{prop}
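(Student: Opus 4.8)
The plan is to follow the two-step matrix-backpropagation recipe of \S\ref{sec:matrix_backpropagation}: first derive the variations $d\Omega$ and $dM$ induced by a variation $dW$ (with $E$ held fixed, since only the derivative in $W$ is sought), then invoke the chain rule $\frac{\partial J_1}{\partial M}:dM + \frac{\partial J_1}{\partial \Omega}:d\Omega = \frac{\partial J_1\circ f}{\partial W}:dW$ to read off the adjoint. The one ingredient needed up front is the variation of the degree matrix: since $D=[W\one]$ depends linearly on $W$, we immediately get $dD=[dW\one]$, and because $D$ is diagonal its fractional powers vary entrywise, so that $d(D^{1/2})=\tfrac12 D^{-1/2}dD$ and $d(D^{-1/2})=-\tfrac12 D^{-3/2}dD$.

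For the variations I would apply the product rule to $\Omega=D^{1/2}EE^\top D^{1/2}$ and to $M=D^{-1/2}WD^{-1/2}$ and substitute the expressions above. Using that diagonal matrices commute (so $D^{-1/2}dD=dD\,D^{-1/2}$, etc.), the two $dD$-terms produced in each expansion turn out to be transposes of one another — exactly the structure $2(\cdot)_{sym}$. Collecting them yields $d\Omega=\left(\Omega D^{-1}[dW\one]\right)_{sym}$ and, for $M$, the ``clean'' term $D^{-1/2}dW D^{-1/2}$ together with $-\left(M D^{-1}[dW\one]\right)_{sym}$. Identifying the transposed pairs here uses symmetry of $W$ (hence of $M$) and of $\Omega$.

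The second step is where the real work lies. Writing $G_M=\frac{\partial J_1}{\partial M}$ and $G_\Omega=\frac{\partial J_1}{\partial \Omega}$, I substitute the variations into the chain rule. The sandwich term is routine: $G_M:D^{-1/2}dW D^{-1/2}=D^{-1/2}G_M D^{-1/2}:dW$ by the adjoint of two-sided multiplication. For the remaining contributions I first use self-adjointness of the symmetric-part operator, $A:(B)_{sym}=(A)_{sym}:B$, to transfer the symmetrization onto $G_M$ and $G_\Omega$, leaving inner products of the form $(G_M)_{sym}:M D^{-1}[dW\one]$ and $(G_\Omega)_{sym}:\Omega D^{-1}[dW\one]$.

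The crux — the step I expect to be the main obstacle — is converting these $[dW\one]$ terms back into an inner product against $dW$, since $dW$ appears only through its row sums. The key observation is that $[dW\one]=dD$ is diagonal, so for any $S$ one has $\Tr(S\,dD)=\sum_i S_{ii}(dW\one)_i=\diag(S)^\top dW\one=\left(\diag(S)\one^\top\right):dW$, which produces the outer-product structure $\diag(\cdot)\one^\top$. Applying this with $S=(G_M)_{sym}M D^{-1}$ and $S=(G_\Omega)_{sym}\Omega D^{-1}$, and then using that $M$, $\Omega$ and the symmetrized gradients are all symmetric (so the diagonal of $AB$ equals that of $BA$, with the diagonal $D^{-1}$ pulled to the left), rewrites the diagonals as $\diag\left(D^{-1}M(G_M)_{sym}\right)$ and $\diag\left(D^{-1}\Omega(G_\Omega)_{sym}\right)$. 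Adding the sandwich term then gives the claimed expression for $\frac{\partial J_1\circ f}{\partial W}$.
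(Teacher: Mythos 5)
Your proposal is correct and follows essentially the same route as the paper: the same entrywise differentiation of $dD=[dW\one]$ and its fractional powers, the same pairing of transposed $dD$-terms into $(\cdot)_{sym}$ for $d\Omega$ and $dM$, and the same key identity $S:[dW\one]=\left(\diag(S)\one^\top\right):dW$ (stated in the paper as $A:[Bx]=\left(\diag(A)x^\top\right):B$) to recover the outer-product term in the adjoint. Your extra remark that the diagonal of a product of symmetric matrices is invariant under swapping the factors is exactly what reconciles your ordering $(G_\Omega)_{sym}\Omega D^{-1}$ with the paper's $D^{-1}\Omega(G_\Omega)_{sym}$, so the two expressions agree.
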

\begin{proof}
For a diagonal matrix $D$ under a diagonal variation $dD$, we can show that $d(D^p) = p D^{p-1} dD$ by means of element-wise differentiation. For the particular $D=[W\one]$, we have $dD = [dW\one]$. Using these, we obtain
\begin{multline}
 d\Omega = \frac{1}{2}dD D^{-1/2}EE^\top D^{1/2} + \frac{1}{2}D^{1/2} EE^\top D^{-1/2}dD = \left(D^{1/2} EE^\top D^{-1/2}dD\right)_{sym} = \left(\Omega D^{-1}[dW\one]\right)_{sym}
\end{multline}
and
\begin{align*}
 dM &= -\frac{1}{2}dD D^{-3/2} W D^{-1/2} ~+ D^{-1/2} dW D^{-1/2} - \frac{1}{2}D^{-1/2} W D^{-3/2}dD\\
 &= D^{-1/2} dW D^{-1/2} - \left(M D^{-1}[dW\one]\right)_{sym}
\end{align*}

Then, plugging in the variations we compute the partial derivative
\begin{align*}
 \frac{\partial J_1}{\partial M}:dM + \frac{\partial J_1}{\partial \Omega}:d\Omega&= \left(D^{-1/2}\frac{\partial J_1}{\partial M} D^{-1/2}\right):dW - \left(D^{-1}M\frac{\partial J_1}{\partial M}_{sym}\right):[dW\one] + \left(D^{-1}\Omega\frac{\partial J_1}{\partial \Omega}_{sym}\right):[dW\one]
  \end{align*}
then identifying we obtain
  \begin{align*}
  \frac{\partial J_1}{\partial W} &= D^{-1/2}\frac{\partial J_1}{\partial M} D^{-1/2} + \diag\left(D^{-1}\Omega\frac{\partial J_1}{\partial \Omega}_{sym} - D^{-1}M\frac{\partial J_1}{\partial M}_{sym}\right)\one^\top
\end{align*}

where we used the property $A:[Bx]=A_{ii}(B_{ij}x_j) = (A_{ii}x_j)B_{ij}= \left(\diag(A)x^\top\right):B$. 
\end{proof}

A related optimization objective also presented in \cite{bach06} is
\begin{equation}
 J_2=\frac{1}{2}\left\Vert \Pi_W-\Pi_\Psi\right\Vert_{F}^2,
\end{equation} with $\Psi=E(E^\top E)^{-1}E^\top$. 
Here we consider $\Pi_W=V(V^\top V)^{-1}V^\top$, where $V=D^{1/2}U$. We observe that this is a projector for $W$ by noting that $\Pi_W= D^{1/2}U(U^\top D U)^{-1}U^\top D^{1/2}$ and $M=U\Sigma U^\top=D^{-1/2}WD^{-1/2}$, by eigen decomposition and \eqref{seg:def_M}. Then indeed
\begin{enumerate}
\item Idempotency of $\Pi_W$
\begin{align*}
\Pi_W^2 &= D^{1/2}U(U^\top D U)^{-1}U^\top D U(U^\top D U)^{-1}U^\top D^{1/2} = \Pi_W
\end{align*}
\item $\Pi_W$ leaves $W$ unchanged \begin{align*}\Pi_W W&=\Pi_W (D^{1/2}MD^{1/2})\\&=D^{1/2}U(U^\top D U)^{-1}(U^\top DU)\Sigma U^\top D^{1/2}\\&=D^{-1/2}U\Sigma U^\top D^{-1/2}=W\end{align*}
\end{enumerate}

\begin{prop}
The corresponding partial derivative $\dfrac{\partial J_2}{\partial W}$ is
\begin{equation}
\frac{\partial J_2}{\partial W} = -2(I-\Pi_W)\Pi_\Psi W^+
\end{equation}
\end{prop}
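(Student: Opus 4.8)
The plan is to exploit the layered structure $W \to \Pi_W \to J_2$ and reduce the whole computation to a single invocation of Lemma~\ref{lem:proj}, exactly as the derivative of $J_1$ was assembled from the projector lemma. The crucial simplification relative to $J_1$ is that the target projector $\Pi_\Psi = E(E^\top E)^{-1}E^\top$ depends only on the ground-truth assignment $E$ and not on $W$; hence there is a single branch of dependence on $W$, flowing through $\Pi_W$ alone. First I would differentiate the outermost layer: writing $J_2 = \tfrac12 (\Pi_W - \Pi_\Psi):(\Pi_W - \Pi_\Psi)$ and holding $\Pi_\Psi$ fixed, the (well-known) Frobenius-norm derivative gives immediately $\dfrac{\partial J_2}{\partial \Pi_W} = \Pi_W - \Pi_\Psi$.

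Next I would apply Lemma~\ref{lem:proj} with $A = W$ and $\dfrac{\partial L}{\partial \Pi_A} = \dfrac{\partial J_2}{\partial \Pi_W}$. This requires two facts already available in the excerpt: that $W$ is symmetric (being a similarity matrix $W=F\Lambda F^\top$), so that admissible variations $dW$ may be taken symmetric, and that the operator $\Pi_W = V(V^\top V)^{-1}V^\top$ with $V=D^{1/2}U$ is genuinely the orthogonal projector $WW^+$ — which is precisely what the idempotency and $\Pi_W W = W$ verifications immediately preceding the proposition establish, so that $A^+$ enters as in the lemma's proof. The lemma then yields $\dfrac{\partial J_2}{\partial W} = 2(I-\Pi_W)\left(\Pi_W - \Pi_\Psi\right)_{sym} W^+$.

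Finally I would simplify. Since $\Pi_W$ and $\Pi_\Psi$ are both symmetric, $\left(\Pi_W - \Pi_\Psi\right)_{sym} = \Pi_W - \Pi_\Psi$, and by idempotency $(I-\Pi_W)\Pi_W = \Pi_W - \Pi_W^2 = 0$. Hence $(I-\Pi_W)(\Pi_W-\Pi_\Psi) = -(I-\Pi_W)\Pi_\Psi$, and substituting produces the claimed $\dfrac{\partial J_2}{\partial W} = -2(I-\Pi_W)\Pi_\Psi W^+$.

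I do not expect a serious obstacle in the algebra itself; the delicate point is the bookkeeping that licenses the use of Lemma~\ref{lem:proj}, namely confirming that the $V$-based expression for $\Pi_W$ coincides with $WW^+$ and that the variation $dW$ may legitimately be restricted to the symmetric subspace, so that the lemma's symmetric-projection derivation transfers verbatim. Once that is granted, it is the cancellation via idempotency, $(I-\Pi_W)\Pi_W = 0$, that collapses the two-term expression into the particularly clean final form.
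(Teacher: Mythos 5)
Your proposal is correct and follows essentially the same route as the paper: a single application of Lemma~\ref{lem:proj} with $\dfrac{\partial J_2}{\partial \Pi_W}=\Pi_W-\Pi_\Psi$ (since $\Pi_\Psi$ is independent of $W$), followed by the collapse via $(I-\Pi_W)\Pi_W=0$. If anything your sign bookkeeping is the more careful one --- the paper's intermediate lines carry a $-2$ where Lemma~\ref{lem:proj} supplies $+2$, with the minus correctly emerging only at the final idempotency step, exactly as you derive it.
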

\begin{proof}

%Note that $V$ are the generalized eigenvectors of $Wx=\lambda D^{-1} x$ (see §2.4.2 of \cite{bach06}).

Since $\Psi$ does not depend on $W$, then $\dfrac{\partial J_2}{\partial \Psi}=0$, so the derivation is much simpler 
\begin{align}
 \frac{\partial J_2}{\partial W} &= -2(I-\Pi_W)\left(\dfrac{\partial J_2}{\partial \Pi_W}\right) W^+ ~~~~~~~~~~~~~~~~~~~~~~~~~~~~~~~~~~~~~~~~~~\text{by Lemma \ref{lem:proj}}\\
 &=-2(I-\Pi_W)(\Pi_W-\Pi_\Psi) W^+~~~~~~~~~~~~~~~~~~~~~~~~~~~~~~~~~~~~~~~\text{by Frobenius derivative}\\
 &=-2(I-\Pi_W)\Pi_\Psi W^+~~~~~~~~~~~~~~~~~~~~~~~~~~~~~~~~~~~~~~~~~~~~~~~~~~~~~~\text{by idempotency of projector}
\end{align}
\end{proof}

Finally, in both cases, we consider a layer that receives $\Lambda$ and $F$ as inputs and outputs the data similarity $W=F\Lambda F^\top$. Following the procedure of section \ref{sec:matrix_backpropagation}, first we compute the first order variations $dW=dF \Lambda F^\top + F d\Lambda F^\top + F \Lambda dF^\top$. We then use the trace properties to make the partial derivatives identifiable
\begin{align*}
 dJ_i = \frac{\partial J_i}{\partial W}:dW &= \frac{\partial J_i}{\partial W}:\left(F d\Lambda F^\top\right) + 2\frac{\partial J_i}{\partial W}:(dF\Lambda F^\top )_{sym}\\
 &= \left(F^\top\frac{\partial J_i}{\partial W} F\right):d\Lambda + 2 \left(\frac{\partial J_i}{\partial W}_{sym}F\Lambda^\top\right):dF
\end{align*}
Thus we obtain
\begin{equation}\frac{\partial J_i}{\partial \Lambda} = F^\top\frac{\partial J_i}{\partial W} F\end{equation} and \begin{equation}{\frac{\partial J_i}{\partial F} = 2\left(\frac{\partial J_i}{\partial W}\right)_{sym}F\Lambda^\top}\, \end{equation}

Note that when $J=J_2$ then $\dfrac{\partial J_2}{\partial\Lambda}=0$, since $(I-\Pi_W)F=F^\top(I-\Pi_W)=0$. Thus we cannot learn $\Lambda$ by relying on our projector trick, but there is no problem learning $F$, which is our objective, and arguably more interesting, anyway.

\FIX{An important feature of} our formulation is that we do not restrict the rank in training. During alignment, the optimization may choose to collapse certain directions thus reducing rank. We \FIX{prove} a topological lemma implying that if the Frobenius distance between the projectors (such as in the two objectives $J_1$, $J_2$) drops below a certain value, then the ranks of the two projectors will match. Conversely, if for some reason the ranks cannot converge, the objectives are bounded away from zero. 
The following lemma shows that when the projectors of two matrices $A$ and $B$ are close enough in the $\lVert\cdot\rVert_2$ norm, then the matrices have the same rank.

\begin{lemma} Let $A,B\in\mathbb{R}^{m\times n}$, and $\Pi_A$, $\Pi_B$ their respective orthogonal projectors. If $\lVert\Pi_A - \Pi_B\rVert_2 < 1$ then $\rank A = \rank B$.
\end{lemma}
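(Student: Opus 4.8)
The plan is to reduce the claim to a statement purely about orthogonal projectors and then force a contradiction by a dimension count. First I would recall that $\Pi_A=AA^+$ is the orthogonal projector onto the column space of $A$, so its range has dimension $\rank A$; since an orthogonal projector has only the eigenvalues $0$ and $1$, we get $\rank\Pi_A=\Tr(\Pi_A)=\rank A$, and likewise $\rank\Pi_B=\rank B$. Thus it suffices to show that two orthogonal projectors $\Pi_A,\Pi_B$ acting on $\reals^m$ with $\lVert\Pi_A-\Pi_B\rVert_2<1$ must have equal rank.

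Second, I would argue by contradiction, assuming without loss of generality that $\rank\Pi_A>\rank\Pi_B$. Writing $\mathcal{R}(\cdot)$ for the range, note that $\dim\mathcal{R}(\Pi_A)=\rank\Pi_A$ and, because $\Pi_B$ is an \emph{orthogonal} projector, $\ker\Pi_B=\mathcal{R}(\Pi_B)^\perp$ has dimension $m-\rank\Pi_B$. The key step is the dimension count
\[
\dim\mathcal{R}(\Pi_A)+\dim\ker\Pi_B=\rank\Pi_A+(m-\rank\Pi_B)>m,
\]
which forces $\mathcal{R}(\Pi_A)\cap\ker\Pi_B\neq\{0\}$.

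Third, I would pick a unit vector $x$ in this intersection. Then $\Pi_A x=x$ (as $x$ lies in the range of the projector $\Pi_A$) and $\Pi_B x=0$ (as $x\in\ker\Pi_B$), so $(\Pi_A-\Pi_B)x=x$ and hence $\lVert\Pi_A-\Pi_B\rVert_2\geq\lVert(\Pi_A-\Pi_B)x\rVert_2/\lVert x\rVert_2=1$, contradicting the hypothesis. The symmetric case $\rank\Pi_B>\rank\Pi_A$ is identical, so $\rank A=\rank\Pi_A=\rank\Pi_B=\rank B$.

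I do not expect a serious obstacle here, as the argument is elementary. The only points demanding care are the identification $\rank\Pi_A=\rank A$ (valid precisely because $\Pi_A$ projects \emph{orthogonally} onto the column space of $A$) and the use of orthogonality to write $\ker\Pi_B=\mathcal{R}(\Pi_B)^\perp$, which is what lets the dimension count produce a vector simultaneously fixed by $\Pi_A$ and annihilated by $\Pi_B$. An alternative route via the principal angles between $\mathcal{R}(\Pi_A)$ and $\mathcal{R}(\Pi_B)$ (or the CS decomposition) would also work, but the direct dimension-count contradiction is the shortest.
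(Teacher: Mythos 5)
Your proof is correct and follows essentially the same route as the paper: assume the ranks differ, produce a vector in the range of one projector that is annihilated by the other, and use it as a witness that the spectral norm of the difference is at least $1$. The only difference is cosmetic — you justify the existence of that witness vector by an explicit dimension count, where the paper simply appeals to the fundamental theorem of linear algebra.
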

\begin{proof} The spectral norm $\lVert\cdot\rVert_2$ can indeed be defined as $\lVert A\rVert_2 = \sup_{\lVert x\rVert_2\neq 0}\frac{\lVert Ax\rVert}{\lVert x\rVert}$. We assume that the ranks of $A$ and $B$ are different, \ie w.l.o.g.~$\rank A>\rank B$. By the fundamental theorem of linear algebra there exists a vector $v$ in the range of $A$ (so that $\Pi_A v = v$), that is orthogonal to the range of $B$ (so that $\Pi_B v =0$). We have then
\[\lVert\Pi_A - \Pi_B\rVert_2 \geq \frac{\lVert \Pi_A v - \Pi_B v\rVert}{\lVert v\rVert} = \frac{\lVert \Pi_A v\rVert}{\lVert v\rVert} = 1 \]
which is a contradiction. 
%\qed
\end{proof}
%\end{lemma}

Given that the Frobenius norm controls the spectral norm, \ie~$\lVert A\rVert_2 \leq \lVert A\rVert_F$ (\S2.3.2 of \cite{Golub96}), an immediate corollary is that when $J_2$ is bounded above by $1/2$, then $||A||_2 < 1$ and the spaces spanned by $W$ and $EE^\top$ are perfectly aligned, \ie
\begin{equation}
 J_2(W) < \frac{1}{2} \Rightarrow \rank(W) = \rank(EE^\top)
\end{equation}

\section{Experiments}\label{sec:exps}
\FIX{In this section we validate the proposed methodology by constructing models on standard datasets for region-based object classification, like Microsoft COCO \cite{mscoco}, and for image segmentation on BSDS \cite{amfm_pami2011}. 
%Our aim is to demonstrate feasibility and expose interesting aspects of our models, as opposed to aiming for state-of-the-art results. 
%Our models are first incarnations so they lack both the fine tuning of parameters and the engineering efforts that went into existing models. 
A matconvnet \cite{matconvnet} implementation of our models and methods is publicly available. }

\subsection{Region Classification on MSCOCO}

For recognition we use the MSCOCO dataset \cite{mscoco}, which provides 880k segmented training instances across 80 classes, divided into training and validation sets. The main goal is to assess our second-order pooling layer in various training settings. A secondary goal is to study the behavior of ConvNets learned from scratch on segmented training data. This has not been explored before in the context of deep learning because of the relatively small size of the datasets with associated object segmentations, such as PASCAL VOC \cite{everingham2010pascal}.

\FIX{The experiments in this section use the convolutional architecture component of AlexNet\cite{krizhevsky2012imagenet} with the global O$_2$P layers we propose in order to obtain DeepO$_2$P  models with both classification and fully connected (FC) layers in the same topology as Alexnet.} We crop and resize each object bounding box to have 200 pixels on the largest side, then pad it to the standard AlexNet input size of 227x227 with small translation jittering, to limit over-fitting. We also randomly flip the images in each mini-batch horizontally, as in standard practice. Training is performed with stochastic gradient descent with momentum. We use the same batch size (100 images) for all methods but the learning rate was optimized for each model independently. 
%Here we call DeepO$_2$P the models that contain O$_2$P layers. 
All the DeepO$_2$P models used the same $\epsilon=10^{-3}$ parameter value in \eqref{eqn:deepo2p_svd}. \\

\begin{table*}[!htbp]
\centering
\begin{tabular}{|c|c|c|c|c|c|c|c|}
\hline
 Method  & SIFT-O$_2$P  & AlexNet & AlexNet (S) & DeepO$_2$P& DeepO$_2$P(S) & DeepO$_2$P-FC&  DeepO$_2$P-FC(S)\\
 \hline
 Results & 36.4          & 25.3    & 27.2       &  28.6      & 32.4          &   \textbf{25.2}   &    28.9   \\
 \hline
\end{tabular}\vspace{2mm}
\caption{Classification error on the validation set of MSCOCO (lower is better). Models with (S) suffixes were trained from scratch (\ie~random initialization) on the MSCOCO dataset. The DeepO$_2$P models only use a classification layer on top of the DeepO$_2$P layer whereas the DeepO$_2$P-FC also have fully connected layers in the same topology as AlexNet. All parameters of our proposed global models are refined jointly, end-to-end, using the proposed matrix backpropagation.%The DeepO$_2$P models take 5x longer to train and had not yet converged (this is denoted by $^*$). Updated results will be presented in the final version of the paper.
}
\label{tab:recog}
\end{table*}

\noindent{\bf Architecture and Implementation details.} Implementing the spectral layers efficiently is challenging since the GPU support for SVD is still very limited and our parallelization efforts even \FIX{using} the latest CUDA 7.0 solver API have delivered a slower implementation than the \FIX{standard CPU-based. Consequently,} we use CPU implementations and incur a penalty for moving data back and forth to the CPU. The numerical experiments revealed that an implementation in single precision obtained a significantly less accurate gradient for learning. Therefore all computations in our proposed layers, both in the forward and backward passes, are made in double precision. In experiments we still noticed a significant accuracy penalty \FIX{due to inferior} precision in all the other layers (above and below the structured ones), still computed in single precision, on the GPU. 

\begin{table*}[!htbp]
\centering
%\begin{tabular}{|c||c|c||c|c||c|c||c|c|}
\begin{tabular}{|c||c||c|c||c|c||c|c|}
\hline
%Arch    &   \cite{cour2005spectral}                &    \cite{amfm_pami2011}               & \multicolumn{2}{c||}{AlexNet}            &\multicolumn{4}{c|}{VGG} \\
Arch    &   \cite{cour2005spectral}                 & \multicolumn{2}{c||}{AlexNet}            &\multicolumn{4}{c|}{VGG} \\
%\cline{6-9}
\cline{5-8}
%Layer   &  & &   \multicolumn{2}{c||}{ReLU-5}         &   \multicolumn{2}{c||}{ReLU-4} &   \multicolumn{2}{c|}{ReLU-5} \\
Layer   &  &   \multicolumn{2}{c||}{ReLU-5}         &   \multicolumn{2}{c||}{ReLU-4} &   \multicolumn{2}{c|}{ReLU-5} \\
\hline
%Method  &  NCuts            & \FIX{gPB}          & NCuts   & \FIX{DeepNCuts}   &NCuts        &DeepNCuts &NCuts &DeepNCuts\\
Method  &  NCuts            & NCuts   & \FIX{DeepNCuts}   &NCuts        &DeepNCuts &NCuts &DeepNCuts\\
\hline
%Results & .55 (.44)           &  .76 (.65)        &     .59 (.49) &    .65 (.56)     & .65 (.56)     & .73 (.62) &  .70 (.58)  & .74 (.63)\\
Results & .55 (.44)           &     .59 (.49) &    .65 (.56)     & .65 (.56)     & .73 (.62) &  .70 (.58)  & .74 (.63)\\
\hline
\end{tabular}
\vspace{2mm}
%\caption{Segmentation results best and average covering to the pool of ground truth segmentations on the BSDS300 dataset \cite{amfm_pami2011} -- larger is better. We use as baselines the original normalized cuts \cite{shi00ncuts} using intervening contour affinities and the state-of-the-art method gPB-owt-ucm (gPB) \cite{amfm_pami2011}. We also report results using normalized cuts on affinities derived from non-finetuned features in different layers of AlexNet (ReLU-5 - the last local ReLU before the fully connected layers) and VGG (first layer in block 4 and last one in block 5). Our models, here named DeepNCuts, are trained end-to-end using the proposed matrix backpropagation methodology, with the objective $J_2$ over the same  layers. }
\caption{Segmentation results give best and average covering to the pool of ground truth segmentations on the BSDS300 dataset \cite{amfm_pami2011} (larger is better). We use as baselines the original normalized cuts \cite{cour2005spectral} using intervening contour affinities as well as normalized cuts with affinities derived from non-finetuned deep features in different layers of AlexNet (ReLU-5 - the last local ReLU before the fully connected layers) and VGG (first layer in block 4 and the last one in block 5). Our DeepNCuts models are trained end-to-end, based on the proposed matrix backpropagation methodology, using the objective $J_2$. }\label{tab:seg_res}
\end{table*}

The second formal derivation of the non-linear spectral layer based on an eigen-decomposition of $Z=F^\top F+\epsilon I$ instead of SVD of $F$ is also possible but our numerical experiments favor the formulation using SVD. The alternative implementation, which is formally correct, exhibits numerical instability in the derivative when multiple eigenvalues have very close values, thus producing blow up in $\tilde{K}$. Such numerical issues are expected to appear under some implementations, when complex layers like the ones presented here are integrated in deep network settings.\\

\noindent{\bf Results.} The results of the recognition experiment are presented in table \ref{tab:recog}. They show that our proposed DeepO$_2$P-FC models, containing global layers, outperform standard convolutional pipelines based on AlexNet, on this problem. The bottom layers are pre-trained on ImageNet using AlexNet, and this might not provide the ideal initial input features. However, despite this potentially unfavorable initialization, our model jointly refines all parameters (both convolutional, and corresponding to global layers), jointly, end to end, using a consistent cost function.
%This shows that not only does the layer work but it works nicely with the existing architecture it builds upon. The performance of DeepO$_2$P shows promise although the bottom layers are pre-trained on ImageNet using AlexNet -- thus it might not be fed the ideal input features. 
%The final version of the paper will contain a comparison with DeepO$_2$P pre-trained on ImageNet as well to complete this picture. 

\FIX{We note that the fully connected layers on top of the DeepO$_2$P layer offer good performance benefits. O$_2$P over hand-crafted SIFT performs considerably less well than our DeepO$_2$P models, suggesting that large potential gains can be achieved when deep features replace existing descriptors.}

\subsection{Full-Image Segmentation on BSDS300}

We use the BSDS300 dataset to validate our deep normalized cuts approach. BSDS contains 200 training images and 100 testing images and human annotations of all the relevant regions in the image. Although small by the standards of neural network learning it provides exactly the supervision we need to refine our model using global information. Note that since the supervision is pixel-wise, the number of effective datapoint constraints is much larger. We evaluate using the average and best covering metric under the Optimal Image Scale (OIS) criterion \cite{amfm_pami2011}. Given a set of full image segmentations computed for an image, selecting the one that maximizes the average and best covering, respectively, compared to the pool of ground truth segmentations.\\

\noindent{\bf Architecture and Implementation details.} 
We use both the AlexNet\cite{krizhevsky2012imagenet} and the VGG-16\cite{Simonyan14c} architectures to feed our global layers. All the parameters of the deep global models (including the low-level features, pretrained on ImageNet) are refined end-to-end. We use a linear affinity but we need all entries of $W$ to be positive. Thus, we use ReLU layers to feed the segmentation ones. Initially, we just cascaded our segmentation layer to different layers in AlexNet but the resulting models were hard to learn. Our best results were obtained by adding two Conv-ReLU pairs initialized randomly before the normalized cuts layer. This results in many filters in the lower layer (256 for AlexNet and 1024 for VGG) for high capacity but few in the top layer (20 dimensions) to limit the maximal rank of $W$. For AlexNet we chose the last convolutional layer while for VGG we used both the first ReLU layer in block\footnote{We call a block the set of layers between two pooling levels.} 4 and the top layer from block 5. This gives us feeds from layers with different invariances, receptive field sizes (32 vs. 132 pixels)  and coarseness (block 4 has $2\times$ the resolution of 5). We used an initial learning rate of $10^{-4}$ but $10\times$ larger rates for the newly initialized layers. A dropout layer between the last two layers with a rate of $.25$ reduces overfitting. In inference, we generate 8 segmentations by clustering\cite{bach06} then connected components are split into separate segments.\\

\noindent{\bf Results.} The results in table \ref{tab:seg_res} show that in all cases we obtain important performance improvements with respect to the corresponding models that perform inference directly on original AlexNet/VGG features.  Training using our Matlab implementation  takes ~2 images/s considering 1 image per batch while testing at about ~3 images/s on a standard Titan Z GPU with an 8 core E5506 CPU. In experiments we monitor both the objective and the rank of the similarity matrix. Rank reduction is usually a good indicator of performance in both training and testing. In the context of the rank analysis in \S\ref{sec:scl}, we interpret these findings to mean that if the rank of the similarity is too large compared to the target, the objective is not sufficient to lead to rank reduction. However if the rank of the predicted similarity and the ground truth are initially not too far apart, then rank reduction (although not always rank matching) does occur and improves the results.

\begin{figure*}[htbp]
\begin{center}
%\[\begin{array}{CCCCCCCC}{\small Image} &{\small AlexNet-5}&{\small AlexNet-5--FT}&{\small VGG-4}&{\small VGG-4--FT}&{\small VGG-5}&{\small VGG-5--FT}&{\small GT}\end{array}\]
\[\begin{array}{|C|C|C|C|C|C|C|C|}
\hline
& \multicolumn{2}{c|}{ \text{AlexNet}}                 &\multicolumn{4}{c|}{  \text{VGG}}                                          &\text{Ground}\\
\cline{4-7}
{ Image} &   \multicolumn{2}{c|}{ \text{ReLU-5}}         &   \multicolumn{2}{c|}{\text{ReLU-4}} &   \multicolumn{2}{c|}{\text{ReLU-5}} &\multicolumn{1}{c|}{\text{ Truth}}\\
\cline{2-7}
&{\small NCuts}   & {\small DeepNCuts}   &{\small NCuts}        &{\small DeepNCuts} &{\small NCuts} &{\small DeepNCuts}&\text{\small (Human)}\\
%&(ReLU-5) &  (ReLU-5)         &   (ReLU-4)  &(ReLU-4) &   (ReLU-5)     &(ReLU-5) & \\
\hline
\end{array}\]\vspace{-4.5mm}
\includegraphics[width=1\textwidth]{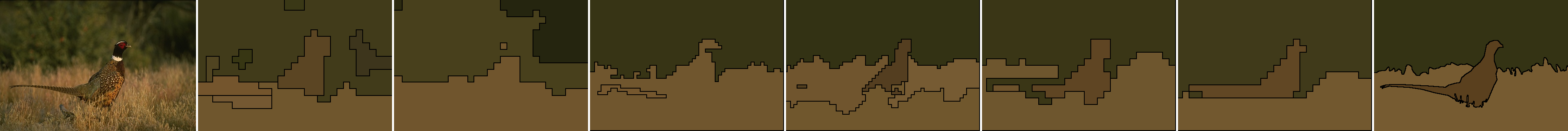}
\includegraphics[width=1\textwidth]{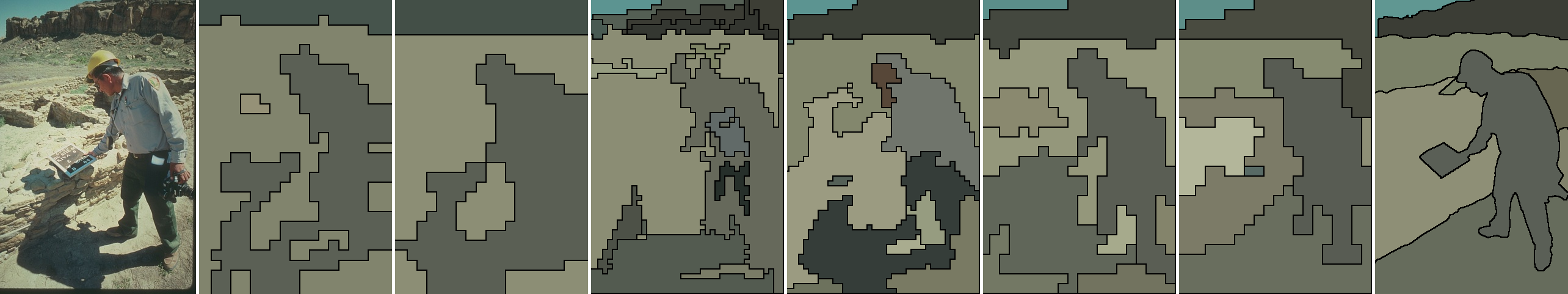}
\includegraphics[width=1\textwidth]{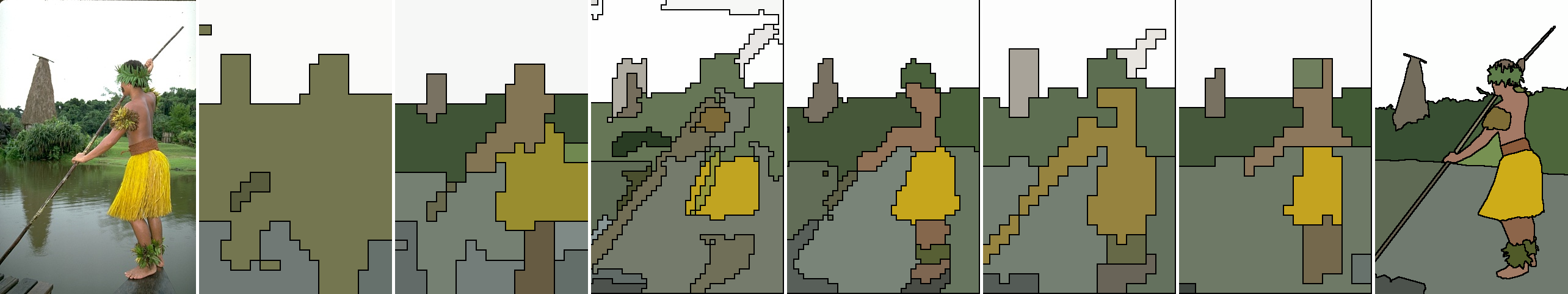}
\includegraphics[width=1\textwidth]{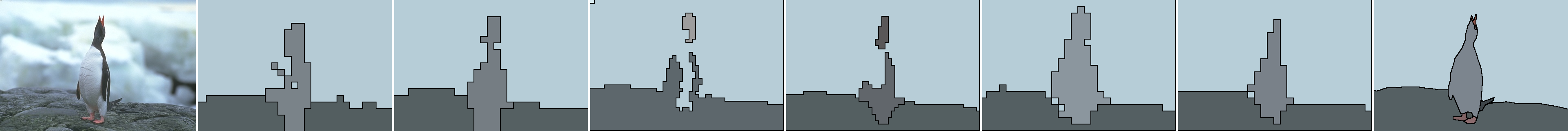}
\includegraphics[width=1\textwidth]{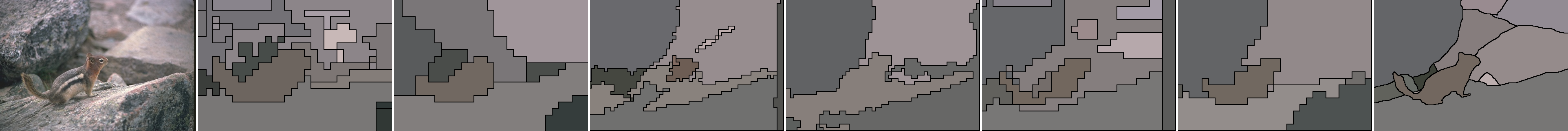}
\includegraphics[width=1\textwidth]{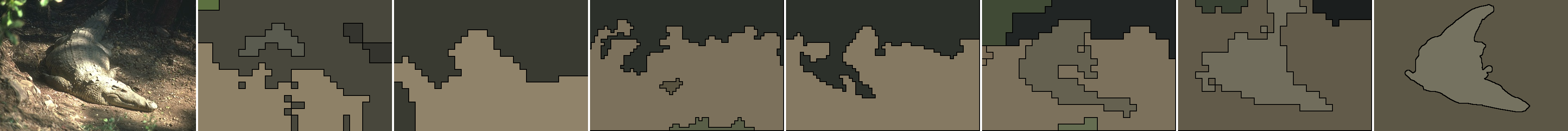}
\includegraphics[width=1\textwidth]{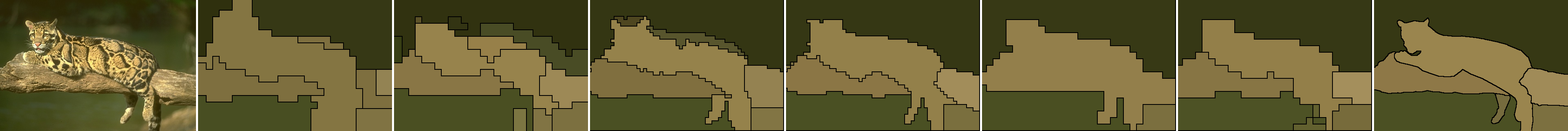}
\includegraphics[width=1\textwidth]{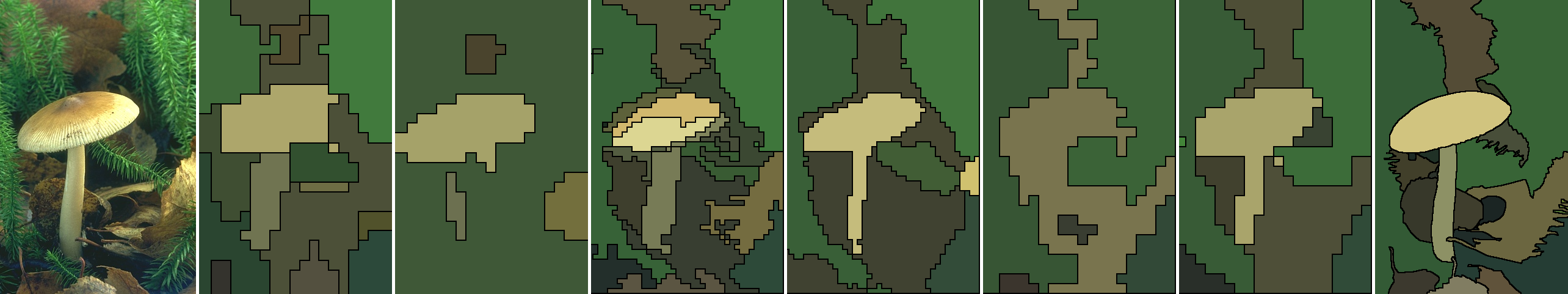}
\includegraphics[width=1\textwidth]{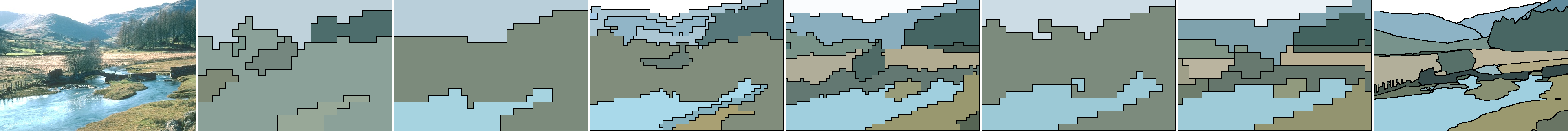}
\includegraphics[width=1\textwidth]{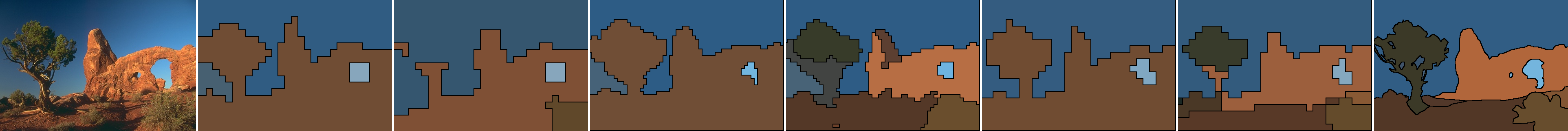}
\includegraphics[width=1\textwidth]{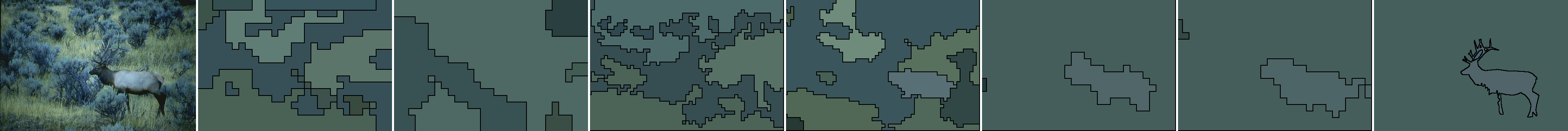}
\includegraphics[width=1\textwidth]{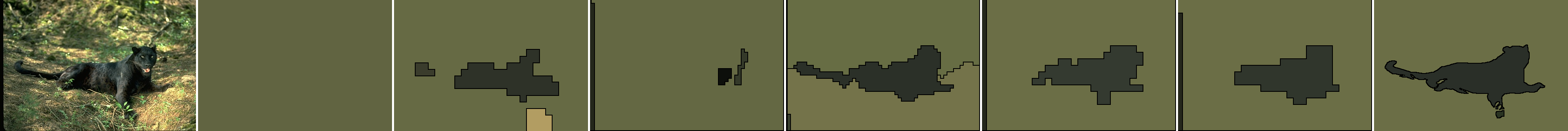}
\caption{Segmentation results on images from the test set of BSDS300. We show on the first column the input image followed by a baseline (original parameters) and our DeepNcuts both using AlexNet ReLU-5. Two other pairs of baselines and DeepNCut models trained based on the $J_2$ objective follow. The first pair uses ReLU-4 and the second ReLU-5. The improvements obtained by learning are both quantitatively significant and easily visible on this side-by-side comparison.}
\end{center}
\end{figure*}

\section{Conclusions}

Motivated by the recent success of deep network architectures, in this work we have introduced the mathematical theory and the computational blocks that support the development of more complex models with layers that perform structured, global computations like segmentation or higher-order pooling. Central to our methodology is the development of the matrix backpropagation methodology which relies on the calculus of \FIX{adjoint} matrix variations. We provide detailed derivations, operating conditions for spectral and non-linear layers, and illustrate the methodology for normalized cuts and second-order pooling layers. Our region visual recognition and segmentation experiments based on MSCoco and BSDS show that deep networks relying on second-order pooling and normalized cuts layers, trained end-to-end using the introduced practice of matrix backpropagation, outperform counterparts that do not take advantage of such global layers.

\paragraph{Acknowledgements.} This work was partly supported by CNCS-UEFISCDI under CT-ERC-2012-1, PCE-2011-3-0438, JRP-RO-FR-2014-16. We thank J. Carreira for helpful discussions and Nvidia for a generous graphics board donation.

\section*{Appendix}
\subsection{Notation and Basic identities}
In this section we present for completeness the notation and some basic linear algebra identities that are useful in the calculations associated to matrix backpropagation and its instantiation for log-covariance descriptors \cite{carreira2014free,Arsigny07geometric} and normalized cuts segmentation \cite{shi00ncuts}.

The following notation is used in the derivations
\begin{itemize}
 \item The symmetric part $A_{sym}=\frac{1}{2}(A^\top+A)$ of a square matrix $A$.
 \item The diagonal operator $A_{diag}$ for an arbitrary matrix $A\in\mathbb{R}^{m\times n}$, which is the $m\times n$ matrix which matches $A$ on the main diagonal and is 0 elsewhere. Using the notations $\diag(A)$ and $[x]$ to denote the diagonal of $A$ (taken as a vector) and the diagonal matrix with the vector $x$ in the diagonal resp., then $A_{diag}=[\diag(A)]$.
 \item The colon-product $A:B = \displaystyle\sum_{i,j}A_{ij}B_{ij}=\Tr(A^\top B)$ for matrices $A,B\in\mathbb{R}^{m\times n}$, and the associated Frobenius norm $\lVert A\rVert:=\sqrt{A:A}$.
 \item The Hadamard (element-wise) product $A\circ B$.
\end{itemize}

We note the following properties of the matrix inner product ``:'' :
\begin{align}
 \label{col:trans}&A:B = A^\top:B^\top = B:A\\
 \label{col:prod}&A:(BC) = (B^\top A):C = (AC^\top):B\\
 \label{col:diag}&A:B_{diag} = A_{diag}:B\\
 \label{col:sym}&A:B_{sym} = A_{sym}:B\\
 \label{col:circ}&A:(B\circ C) = (B\circ A):C\\
 \label{col:block}& \bigl(A_1|A_2\bigr):\bigl(B_1|B_2\bigr)=A_1:B_1 + A_2:B_2
\end{align}

{\small
\bibliographystyle{ieeetr}
\bibliography{snn_arxiv_v2}

\begin{thebibliography}{10}

\bibitem{ionescu15iccv}
C.~Ionescu, O.~Vantzos, and C.~Sminchisescu, ``{Matrix Backpropagation for Deep
  Networks with Structured Layers},'' in {\em IEEE International Conference on
  Computer Vision}, 2015.

\bibitem{krizhevsky2012imagenet}
A.~Krizhevsky, I.~Sutskever, and G.~E. Hinton, ``Imagenet classification with
  deep convolutional neural networks,'' in {\em NIPS}, pp.~1097--1105, 2012.

\bibitem{Simonyan14c}
K.~Simonyan and A.~Zisserman, ``Very deep convolutional networks for
  large-scale image recognition,'' {\em CoRR}, vol.~abs/1409.1556, 2014.

\bibitem{carreira2014distributed}
M.~A. Carreira-Perpin{\'a}n and W.~Wang, ``Distributed optimization of deeply
  nested systems,'' {\em Artificial Intelligence and Statistics}, 2014.

\bibitem{schmidhuber14}
J.~Schmidhuber, ``Deep learning in neural networks: An overview,'' {\em Neural
  Networks}, vol.~61, pp.~85 -- 117, 2015.

\bibitem{lecun1998gradient}
Y.~LeCun, L.~Bottou, Y.~Bengio, and P.~Haffner, ``Gradient-based learning
  applied to document recognition,'' {\em Proceedings of the IEEE}, vol.~86,
  no.~11, pp.~2278--2324, 1998.

\bibitem{shi00ncuts}
J.~Shi and J.~Malik, ``Normalized cuts and image segmentation,'' {\em IEEE
  Transactions on Pattern Analysis and Machine Intelligence}, vol.~22,
  pp.~888--905, Aug 2000.

\bibitem{harris1988combined}
C.~Harris and M.~Stephens, ``A combined corner and edge detector.,'' in {\em
  Alvey vision conference}, vol.~15, p.~50, Manchester, UK, 1988.

\bibitem{carreira2012semantic}
J.~Carreira, R.~Caseiro, J.~Batista, and C.~Sminchisescu, ``Semantic
  segmentation with second-order pooling,'' in {\em European Conference on
  Computer Vision}, pp.~430--443, 2012.

\bibitem{tomasi1992shape}
C.~Tomasi and T.~Kanade, ``Shape and motion from image streams under
  orthography: a factorization method,'' {\em International Journal of Computer
  Vision}, vol.~9, no.~2, pp.~137--154, 1992.

\bibitem{hartley2003multiple}
R.~Hartley and A.~Zisserman, {\em Multiple view geometry in computer vision}.
\newblock Cambridge university press, 2003.

\bibitem{jolliffe2002principal}
I.~Jolliffe, {\em Principal Component Analysis}.
\newblock Springer Series in Statistics, Springer, 2002.

\bibitem{Belkin2003}
M.~Belkin and P.~Niyogi, ``{Laplacian Eigenmaps for Dimensionality Reduction
  and Data Representation},'' {\em Neural Computation}, vol.~15, no.~6,
  pp.~1373--1396, 2003.

\bibitem{bach06}
F.~R. Bach and M.~I. Jordan, ``Learning spectral clustering, with application
  to speech separation,'' {\em Journal of Machine Learning Research}, vol.~7,
  pp.~1963--2001, 2006.

\bibitem{sminchisescu_ijcv04}
C.~Sminchisescu and B.~Triggs, ``{Building Roadmaps of Minima and Transitions
  in Visual Models},'' {\em International Journal of Computer Vision}, vol.~61,
  no.~1, 2005.

\bibitem{eriksson2011normalized}
A.~Eriksson, C.~Olsson, and F.~Kahl, ``Normalized cuts revisited: A
  reformulation for segmentation with linear grouping constraints,'' {\em
  Journal of Mathematical Imaging and Vision}, vol.~39, no.~1, pp.~45--61,
  2011.

\bibitem{mcculloch1943logical}
W.~S. McCulloch and W.~Pitts, ``A logical calculus of the ideas immanent in
  nervous activity,'' {\em The bulletin of mathematical biophysics}, vol.~5,
  no.~4, pp.~115--133, 1943.

\bibitem{frank1957perceptron}
R.~Frank, ``The perceptron a perceiving and recognizing automaton,'' tech.
  rep., Technical Report 85-460-1, Cornell Aeronautical Laboratory, 1957.

\bibitem{werbos74}
P.~J. Werbos, {\em Beyond Regression: New Tools for Prediction and Analysis in
  the Behavioral Sciences}.
\newblock PhD thesis, Harvard University, 1974.

\bibitem{rumelhart86}
D.~E. Rumelhart, G.~E. Hinton, and R.~J. Williams, {\em Learning Internal
  Representations by Error Propagation}.
\newblock Cambridge, MA, USA: MIT Press, 1986.

\bibitem{lecun1988theoretical}
Y.~Le~Cun, D.~Touresky, G.~Hinton, and T.~Sejnowski, ``A theoretical framework
  for back-propagation,'' in {\em The Connectionist Models Summer School},
  vol.~1, pp.~21--28, 1988.

\bibitem{hinton12deep}
G.~Hinton, L.~Deng, D.~Yu, G.~Dahl, A.~rahman Mohamed, N.~Jaitly, A.~Senior,
  V.~Vanhoucke, P.~Nguyen, T.~Sainath, and B.~Kingsbury, ``Deep neural networks
  for acoustic modeling in speech recognition,'' {\em Signal Processing
  Magazine}, 2012.

\bibitem{dalal2005histograms}
N.~Dalal and B.~Triggs, ``Histograms of oriented gradients for human
  detection,'' in {\em IEEE International Conference on Computer Vision and
  Pattern Recognition}, vol.~1, pp.~886--893, IEEE, 2005.

\bibitem{lazebnik2006beyond}
S.~Lazebnik, C.~Schmid, and J.~Ponce, ``Beyond bags of features: Spatial
  pyramid matching for recognizing natural scene categories,'' in {\em IEEE
  International Conference on Computer Vision and Pattern Recognition}, vol.~2,
  pp.~2169--2178, IEEE, 2006.

\bibitem{perronnin10}
F.~Perronnin, J.~S\'{a}nchez, and T.~Mensink, ``Improving the fisher kernel for
  large-scale image classification,'' in {\em European Conference on Computer
  Vision}, pp.~143--156, 2010.

\bibitem{gosselin2014revisiting}
P.-H. Gosselin, N.~Murray, H.~J{\'e}gou, and F.~Perronnin, ``Revisiting the
  fisher vector for fine-grained classification,'' {\em Pattern Recognition
  Letters}, vol.~49, pp.~92--98, 2014.

\bibitem{felzenszwalb2010object}
P.~F. Felzenszwalb, R.~B. Girshick, D.~McAllester, and D.~Ramanan, ``Object
  detection with discriminatively trained part-based models,'' {\em IEEE
  Transactions on Pattern Analysis and Machine Intelligence}, vol.~32, no.~9,
  pp.~1627--1645, 2010.

\bibitem{girshick2014rich}
R.~Girshick, J.~Donahue, T.~Darrell, and J.~Malik, ``Rich feature hierarchies
  for accurate object detection and semantic segmentation,'' in {\em IEEE
  International Conference on Computer Vision and Pattern Recognition},
  pp.~580--587, IEEE, 2014.

\bibitem{hariharan2014simultaneous}
B.~Hariharan, P.~Arbelaez, R.~Girshick, and J.~Malik, ``Simultaneous detection
  and segmentation,'' in {\em ECCV}, 2014.

\bibitem{he2014spatial}
K.~He, X.~Zhang, S.~Ren, and J.~Sun, ``Spatial pyramid pooling in deep
  convolutional networks for visual recognition,'' {\em IEEE Transactions on
  Pattern Analysis and Machine Intelligence}, 2015.

\bibitem{dai2014convolutional}
J.~Dai, K.~He, and J.~Sun, ``Convolutional feature masking for joint object and
  stuff segmentation,'' in {\em IEEE International Conference on Computer
  Vision and Pattern Recognition}, 2015.

\bibitem{ciresan2012multi}
D.~Ciresan, U.~Meier, and J.~Schmidhuber, ``Multi-column deep neural networks
  for image classification,'' in {\em IEEE International Conference on Computer
  Vision and Pattern Recognition}, pp.~3642--3649, IEEE, 2012.

\bibitem{jayasumana2013kernel}
S.~Jayasumana, R.~Hartley, M.~Salzmann, H.~Li, and M.~Harandi, ``Kernel methods
  on the riemannian manifold of symmetric positive definite matrices,'' in {\em
  IEEE International Conference on Computer Vision and Pattern Recognition},
  IEEE, 2013.

\bibitem{bottou1997global}
L.~Bottou, Y.~Bengio, and Y.~Le~Cun, ``Global training of document processing
  systems using graph transformer networks,'' in {\em IEEE International
  Conference on Computer Vision and Pattern Recognition}, pp.~489--494, IEEE,
  1997.

\bibitem{peng2009conditional}
J.~Peng, L.~Bo, and J.~Xu, ``Conditional neural fields,'' in {\em NIPS},
  pp.~1419--1427, 2009.

\bibitem{long2014fully}
J.~Long, E.~Shelhamer, and T.~Darrell, ``{Fully Convolutional Networks for
  Semantic Segmentation},'' in {\em IEEE International Conference on Computer
  Vision and Pattern Recognition}, 2015.

\bibitem{chen2014semantic}
L.-C. Chen, G.~Papandreou, I.~Kokkinos, K.~Murphy, and A.~L. Yuille, ``Semantic
  image segmentation with deep convolutional nets and fully connected crfs,''
  in {\em ICLR}, 2015.

\bibitem{krahenbuhl2013parameter}
P.~Kr{\"a}henb{\"u}hl and V.~Koltun, ``Parameter learning and convergent
  inference for dense random fields,'' in {\em International Conference on
  Machine Learning}, pp.~513--521, 2013.

\bibitem{Schwing15fully}
A.~G. Schwing and R.~Urtasun, ``Fully connected deep structured networks,''
  {\em CoRR}, vol.~abs/1503.02351, 2015.

\bibitem{ZhengJRVSDHT_iccv15}
S.~Zheng, S.~Jayasumana, B.~Romera{-}Paredes, V.~Vineet, Z.~Su, D.~Du,
  C.~Huang, and P.~H.~S. Torr, ``Conditional random fields as recurrent neural
  networks,'' {\em IEEE International Conference on Computer Vision}, 2015.

\bibitem{chen14learning}
L.~Chen, A.~G. Schwing, A.~L. Yuille, and R.~Urtasun, ``Learning deep
  structured models,'' {\em CoRR}, vol.~abs/1407.2538, 2014.

\bibitem{Lin15Efficient}
G.~Lin, C.~Shen, I.~D. Reid, and A.~v.~d. Hengel, ``Efficient piecewise
  training of deep structured models for semantic segmentation,'' {\em CoRR},
  vol.~abs/1504.01013, 2015.

\bibitem{tappen07learning}
M.~Tappen, C.~Liu, E.~Adelson, and W.~Freeman, ``Learning gaussian conditional
  random fields for low-level vision,'' in {\em IEEE International Conference
  on Computer Vision and Pattern Recognition}, 2007.

\bibitem{amfm_pami2011}
P.~Arbelaez, M.~Maire, C.~Fowlkes, and J.~Malik, ``Contour detection and
  hierarchical image segmentation,'' {\em IEEE Transactions on Pattern Analysis
  and Machine Intelligence}, vol.~33, pp.~898--916, May 2011.

\bibitem{cour2005spectral}
T.~Cour, F.~Benezit, and J.~Shi, ``Spectral segmentation with multiscale graph
  decomposition,'' in {\em IEEE International Conference on Computer Vision and
  Pattern Recognition}, 2005.

\bibitem{briggman2009maximin}
S.~Turaga, K.~Briggman, M.~N. Helmstaedter, W.~Denk, and S.~Seung, ``Maximin
  affinity learning of image segmentation,'' in {\em NIPS}, pp.~1865--1873,
  2009.

\bibitem{bruna13}
J.~Bruna, W.~Zaremba, A.~Szlam, and Y.~LeCun, ``Spectral networks and locally
  connected networks on graphs,'' {\em CoRR}, vol.~abs/1312.6203, 2013.

\bibitem{henaff15}
M.~Henaff, J.~Bruna, and Y.~LeCun, ``Deep convolutional networks on
  graph-structured data,'' {\em CoRR}, vol.~abs/1506.05163, 2015.

\bibitem{dwyer1948symbolic}
P.~S. Dwyer and M.~MacPhail, ``Symbolic matrix derivatives,'' {\em {The Annals
  of Mathematical Statistics}}, pp.~517--534, 1948.

\bibitem{matrixcookbook}
K.~B. Petersen and M.~S. Pedersen, ``The matrix cookbook,'' nov 2012.
\newblock Version 20121115.

\bibitem{magnus1999matrix}
J.~R. Magnus and H.~Neudecker, {\em Matrix differential calculus with
  applications in statistics and econometrics}.
\newblock Chichester, New York, Weinheim: J. Wiley \& Sons, 1999.

\bibitem{giles2008collected}
M.~B. Giles, ``Collected matrix derivative results for forward and reverse mode
  algorithmic differentiation,'' in {\em Advances in Automatic
  Differentiation}, pp.~35--44, Springer, 2008.

\bibitem{papadopoulo2000estimating}
T.~Papadopoulo and M.~I. Lourakis, ``Estimating the jacobian of the singular
  value decomposition: Theory and applications,'' in {\em European Conference
  on Computer Vision}, pp.~554--570, Springer, 2000.

\bibitem{andrew2013deep}
G.~Andrew, R.~Arora, J.~Bilmes, and K.~Livescu, ``Deep canonical correlation
  analysis,'' in {\em International Conference on Machine Learning},
  pp.~1247--1255, 2013.

\bibitem{lin15bilinearcnn}
T.~Lin, A.~RoyChowdhury, and S.~Maji, ``Bilinear {CNN} models for fine-grained
  visual recognition,'' {\em CoRR}, vol.~abs/1504.07889, 2015.

\bibitem{ics_nips11}
A.~Ion, J.~Carreira, and C.~Sminchisescu, ``{Probabilistic Joint Image
  Segmentation and Labeling},'' in {\em Advances in Neural Information
  Processing Systems}, December 2011.

\bibitem{li2013composite}
F.~Li, J.~Carreira, G.~Lebanon, and C.~Sminchisescu, ``Composite statistical
  inference for semantic segmentation,'' in {\em IEEE International Conference
  on Computer Vision and Pattern Recognition}, pp.~3302--3309, 2013.

\bibitem{Arsigny07geometric}
V.~Arsigny, P.~Fillard, X.~Pennec, and N.~Ayache, ``Geometric means in a novel
  vector space structure on symmetric positive‐definite matrices,'' {\em SIAM
  Journal on Matrix Analysis and Applications}, vol.~29, no.~1, pp.~328--347,
  2007.

\bibitem{lowe1999object}
D.~G. Lowe, ``Object recognition from local scale-invariant features,'' in {\em
  IEEE International Conference on Computer Vision}, vol.~2, pp.~1150--1157,
  Ieee, 1999.

\bibitem{Golub96}
G.~H. Golub and C.~F. Van~Loan, {\em Matrix Computations (3rd Ed.)}.
\newblock USA: Johns Hopkins University Press, 1996.

\bibitem{mscoco}
T.-Y. Lin, M.~Maire, S.~Belongie, J.~Hays, P.~Perona, D.~Ramanan, P.~Dollár,
  and C.~Zitnick, ``{Microsoft COCO: Common Objects in Context},'' in {\em
  European Conference on Computer Vision}, 2014.

\bibitem{matconvnet}
A.~Vedaldi and K.~Lenc, ``{MatConvNet -- Convolutional Neural Networks for
  MATLAB},'' in {\em Int. Conf. Multimedia}, 2015.

\bibitem{everingham2010pascal}
M.~Everingham, L.~Van~Gool, C.~K. Williams, J.~Winn, and A.~Zisserman, ``{The
  Pascal visual object classes (VOC) challenge},'' {\em International Journal
  of Computer Vision}, vol.~88, no.~2, pp.~303--338, 2010.

\bibitem{carreira2014free}
J.~Carreira, R.~Caseiro, J.~Batista, and C.~Sminchisescu, ``Free-form region
  description with second-order pooling,'' {\em IEEE Transactions on Pattern
  Analysis and Machine Intelligence}, 2014.

\end{thebibliography}
}

\end{document}